\definecolor{LightGray}{RGB}{230,230,230}
\tikzset{
  main/.style={circle, minimum size = 5mm, thick, draw =black!80, node distance = 10mm},
  connect/.style={-latex, thick},
  box/.style={rectangle, draw=black!100},
  edge/.style={->,> = latex'}
}
\DeclareMathOperator*{\diag}{diag}
\DeclareMathOperator{\Tr}{Tr}
\DeclareMathOperator{\Cond}{Cond}
\DeclareMathOperator{\vect}{vec}
\newcommand{\EX}[1]{\mathbb{E}_{X\sim P_{\theta}}\left[#1\right]}
\newcommand{\E}{\mathbb{E}}
\newcommand{\R}{\mathbb{R}}
\newcommand{\B}{\mathcal{B}}
\newcommand{\X}{\mathbb{X}}
\newcommand{\FM}{G}
\newcommand{\FIM}{\FM_{\theta}}
\newcommand{\trans}{\mathrm{T}}
\newcommand{\norm}[1]{\lVert #1 \rVert}
\newcommand{\abs}[1]{\lvert #1 \rvert}
\newcommand{\asto}{\stackrel{\mathrm{a.s.}}{\rightarrow}}
\newcommand{\lln}{LLN}
\newcommand{\numin}{\nu_{\mathrm{min}}}
\newcommand{\nucmin}{\nu_{\Sigma,\mathrm{min}}}
\newcommand{\nummin}{\nu_{\mu,\mathrm{min}}}
\newcommand{\nuclim}{\nu_{\Sigma,\mathrm{lim}}}
\newcommand{\hnu}[1]{\Lfh}
\newcommand{\tnabla}{\tilde \nabla}
\newcommand{\HMat}{H}
\renewcommand{\geq}{\geqslant}
\renewcommand{\leq}{\leqslant}
\renewcommand{\epsilon}{\varepsilon}
\renewcommand{\phi}{\varphi}
\providecommand{\mea}{\mu}
\providecommand{\leb}{\mu_\mathrm{Leb}}
\providecommand{\borel}{\mathcal{F}}
\providecommand{\Lf}{L_{f}}
\providecommand{\Uf}{U_{f}}
\providecommand{\dd}{\mathrm{d}}
\providecommand{\Lfh}{\widehat{\Lf}}
\newtheorem{lemma}{Lemma}[section]
\newtheorem{proposition}{Proposition}[section]
\newtheorem{remark}{Remark}[section]
\title{NGO-GM: Natural Gradient Optimization for Graphical Models}
\author{ {\bf Eric Benhamou} \\
A.I Square Connect\thanks{\,\, A.I Square Connect, Neuilly sur Seine, France}, Lamsade\thanks{\,\, Lamsade, Université Paris Dauphine, PSL, France}\\
\And
{\bf Jamal Atifl}  \\
Lamsade$^{\dagger}$\\
\And
{\bf Rida Laraki}   \\
Lamsade$^{\dagger}$\\
\And
{\bf David Saltiel}   \\
A.I Square Connect$^{*}$ \\
}
\begin{document}

\maketitle

\begin{abstract}
This paper deals with estimating model parameters in graphical models. We reformulate it as an information geometric optimization problem and introduce a natural gradient descent strategy that incorporates additional meta parameters. We show that our approach is a strong alternative to the celebrated EM approach for learning in graphical models. Actually, our natural gradient based strategy leads to learning optimal parameters for the final objective function without artificially trying to fit a distribution that may not correspond to the real one. We support our theoretical findings with the question of trend detection in financial markets and show that the learned model performs better than traditional practitioner methods and is less prone to overfitting.
\end{abstract}


\section{Introduction}
One of the most challenging question in social sciences and in particular financial markets, is to be able from past observations to make some meaningful predictions. Part of the challenge comes from multiple difficulties. First of all, there is no universally established physical law and our model will be at best a simplified version and at worst a complete non sense. Second, sequential information does not imply stationarity of the process: data may incur regime changes. Third, we should find the right balance between not enough and too much modeling. 

To be able to represent connection between our variables, an efficient and meaningful framework is graphical models. As stated in \cite{Jordan_2012}, \textit{graphical models are a marriage between probability theory and graph theory}. They are powerful as they provide a condensed way to represent variables dependencies. The graphical representation allows not only compacting information, it also provides a powerful formalism for reasoning under uncertainty. It represents knowledge about the dynamics of the variables, their dependencies and their conditional distribution, hence sometimes called also Dynamic Bayesian Networks (DBN). 

Graphical models exploit latent variables to make the model better in terms of explanation power. By defining a joint distribution over visible and latent variables, the corresponding distribution of the observed variables is obtained by marginalization. This has the nice property to express relatively complex distributions in terms of more tractable joint distributions over the expanded
variable space. The canonical atomic example of hidden variable models is the mixture distribution in which the hidden variable is the discrete component label that provides the corresponding distribution for the observable variable. The static version leads to the Gaussian mixture model and in continuous space to the factor analysis model while the dynamic version leads respectively to HMM and the Kalman filter model, often referred to as the state space model and represented by figure~\ref{DBN1}. However, dynamic graphical models can be more complex as given by figure~\ref{DBN2} with combination of the Kalman filter (KF) model and echo state networks (ESN) or by figure~\ref{DBN3} with feedback from past observations to next step latent variables. We will exploit a combination of the latter in our numerical experiments.

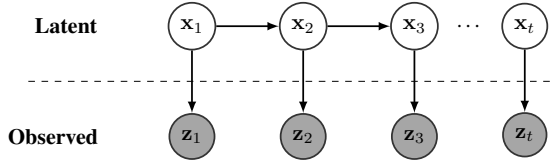
\begin{figure}[ht]
\centering
\resizebox {.45\textwidth} {!} {
\begin{tikzpicture}
 \node[box,draw=white!100] (Latent) {\textbf{Latent}};
 \node[main] (L1) [right=of Latent] {$\mathbf{x}_1$};
 \node[main] (L2) [right=of L1] {$\mathbf{x}_2$};
 \node[main] (L3) [right=of L2] {$\mathbf{x}_3$};
 \node[main] (Lt) [right=of L3] {$\mathbf{x}_t$};
 \node[main,fill=black!35] (O1) [below=of L1] {$\mathbf{z}_1$};
 \node[main,fill=black!35] (O2) [below=of L2] {$\mathbf{z}_2$};
 \node[main,fill=black!35] (O3) [below=of L3] {$\mathbf{z}_3$};
 \node[main,fill=black!35] (Ot) [below=of Lt] {$\mathbf{z}_t$};
 \node[box,draw=white!100,left=of O1] (Observed) {\textbf{Observed}};

 \path (L1) edge [connect] (L2)
        (L2) edge [connect] (L3)
        (L3) -- node[auto=false]{\ldots} (Lt);

 \path (L1) edge [connect] (O1)
	(L2) edge [connect] (O2)
	(L3) edge [connect] (O3)
	(Lt) edge [connect] (Ot);

 \draw [dashed, shorten >=-0.5cm, shorten <=-0.5cm]
      ($(Latent)!0.5!(Observed)$) coordinate (a) -- ($(Lt)!(a)!(Ot)$);
\end{tikzpicture}
}
\caption{State Space model represented as a Bayesian Probabilistic Graphical model. Each vertical slice represents a time step. Nodes in gray (resp. white) represent observable (resp. non observable or latent) variables. This State Space model encompasses HMM and KF models} \label{DBN1}
\end{figure}

\begin{figure}[ht]
\centering
\resizebox {.47\textwidth} {!} {
\begin{tikzpicture}
 \node[box,draw=white!100] (Latent) {\textbf{Latent}};
 \node[main] (L1) [right=of Latent] {$\mathbf{x}_1$};
 \node[main] (L2) [right=of L1] {$\mathbf{x}_2$};
 \node[box,draw=white!100] (L4) [right=of L2] {};
 \node[main] (Lt) [right=of L4] {$\mathbf{x}_t$};
 \node[main] (L21) [below=0.5cm of L1 ] {$\mathbf{y}_1$};
 \node[main] (L22) [below=0.5cm of L2] {$\mathbf{y}_2$};
 \node[box,draw=white!100] (L24) [below=1.01cm of L4] {};
 \node[main] (L2t) [below=0.5cm of Lt] {$\mathbf{y}_t$};
 \node[main,fill=black!35] (O1) [below=of L21] {$\mathbf{z}_1$};
 \node[main,fill=black!35] (O2) [below=of L22] {$\mathbf{z}_2$};
 \node[box,draw=white!100] (O4) [below=of L24] {};
 \node[main,fill=black!35] (Ot) [below=of L2t] {$\mathbf{z}_t$};
 \node[box,draw=white!100,below=80pt] (Observed) {\textbf{Observed}};
 \path (L1) edge [connect] (L2)
        (L2) -- node{\ldots} (L4)
        (L4) edge [connect] (Lt);
 \path (L21) edge [connect] (L22)
        (L22) -- node{\ldots} (L24)
        (L24) edge [connect] (L2t);
 \path (L21) edge [connect] (O1)
	(L22) edge [connect] (O2)
	(L2t) edge [connect] (Ot);
 \path (L1) edge [bend right,connect] (O1)
	(L2) edge [bend right, connect] (O2)
	(Lt) edge [bend right, connect] (Ot);
 \path (L1) edge [connect] (L22)
        (L4) edge [connect] (L2t);
 \draw [dashed, shorten >=-0.5cm, shorten <=-2.5cm]
     ($(O1)+(0,0.852)$) -- ($(Ot)+(0,0.8)$);
\end{tikzpicture}
}\caption{Example of another dynamic Bayesian network combining Kalman filter (KF) model and echo state networks (ESN). This is another example of a multi-input several multi-outputs (MISMO) forecasting model. It is used frequently in time series forecast (see for instance \cite{Xiao_2017})} \label{DBN2}
\end{figure}
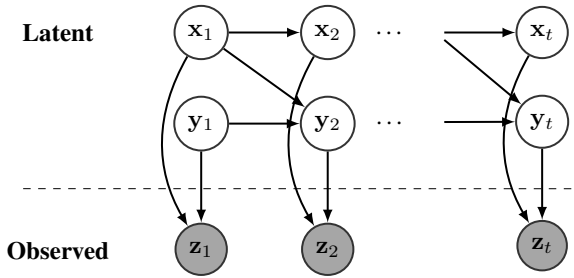

\begin{figure}[ht]
\centering
\resizebox {.47\textwidth} {!} {
\begin{tikzpicture}
 \node[box,draw=white!100] (Latent) {\textbf{Latent}};
 \node[main] (L1) [right=of Latent] {$\mathbf{x}_1$};
 \node[main] (L2) [right=of L1] {$\mathbf{x}_2$};
 \node[box,draw=white!100] (L4) [right=of L2] {};
 \node[main] (Lt) [right=of L4] {$\mathbf{x}_t$};
 \node[main] (L21) [below=0.5cm of L1 ] {$\mathbf{y}_1$};
 \node[main] (L22) [below=0.5cm of L2] {$\mathbf{y}_2$};
 \node[box,draw=white!100] (L24) [below=1.01cm of L4] {};
 \node[box,draw=white!100] (L24bis) [below=1.2cm of L4] {};
 \node[main] (L2t) [below=0.5cm of Lt] {$\mathbf{y}_t$};
 \node[main,fill=black!35] (O1) [below=of L21] {$\mathbf{z}_1$};
 \node[main,fill=black!35] (O2) [below=of L22] {$\mathbf{z}_2$};
 \node[box,draw=white!100] (O4) [below=of L24] {};
 \node[box,draw=white!100] (O4bis) [below=1.5cm of L24] {};
 \node[main,fill=black!35] (Ot) [below=of L2t] {$\mathbf{z}_t$};
 \node[box,draw=white!100,below=90pt] (Observed) {\textbf{Observed}};
 \path (L1) edge [connect] (L2)
        (L2) -- node{\ldots} (L4)
        (L4) edge [connect] (Lt);
 \path (L21) edge [connect] (L22)
        (L22) -- node{\ldots} (L24)
        (L24) edge [connect] (L2t);
 \path (L21) edge [connect] (O1)
	(L22) edge [connect] (O2)
	(L2t) edge [connect] (Ot);
 \path (L1) edge [bend right,connect] (O1)
	(L2) edge [bend right, connect] (O2)
	(Lt) edge [bend right, connect] (Ot);
 \path (O1) edge [connect] (L22)
	(O4bis) edge [connect] (L2t);

 \draw [dashed, shorten >=-0.5cm, shorten <=-2.5cm]
     ($(O1)+(0,0.855)$) -- ($(Ot)+(0,0.8)$);
\end{tikzpicture}
}\caption{Example of a dynamic Bayesian network with connection between past observable variables and latent variables} \label{DBN3}
\end{figure}
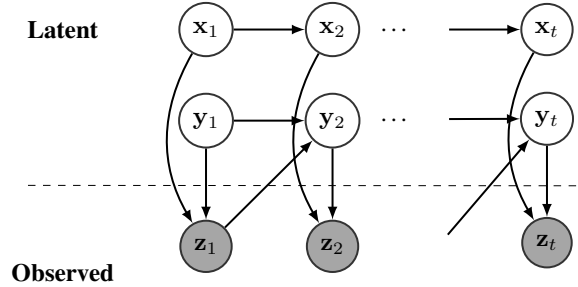
To finalize our model, we need to solve the issue of learning the its parameters. The typical learning approach is the Expectation Maximization (EM) algorithm. It was initially developed for mixture models, in particular Gaussian mixtures and other natural laws such as Poisson, binomial, multinomial and exponential distributions, see \cite{Hartley_1958} and \cite{Dempster_1977}. It was only when the link between latent variables and Kalman filter models was made that it became clear that it could also be applied to Kalman and extended Kalman filters (see \cite{Cappe_2010} or \cite{Einicke_2010}). The EM method is so far the state of the art method for learning graphical model parameters as it provides an efficient way to find model parameters in a fraction of seconds (see for instance \cite{Neal_1999}, but also  \cite{Pfeifer_2018}, \cite{Li_2017}, \cite{Robin_2017}, \cite{Levine_2018}). Interestingly \cite{XuJordan_1996} shows that the EM method can be viewed as a gradient descent where the decrement is computed as the projection of the gradient, making it a variable metric gradient ascent. Similarly, \cite{Amari_1995} proves that the E and M steps in EM can be interpreted as dual projected gradient flows under dual affine connections using information geometry. This advocates to find alternatives that are also gradient descent but in the natural space.

We argue that although EM enjoys several nice properties, it misses the point that graphical models are imprecise and simplified models for the reality especially when tackling complex problems such as time series forecasting. In particular, whenever we apply graphical models to economics and finance, we are forced to make some modeling assumptions about the state dynamics and the graph topology (the DBN structure). These assumptions may be incorrectly specified and add some bias compared to reality. Trying to use a best fit approach through maximum likelihood estimation and Kullback Leibler divergence optimization may miss this point and try to fit at all cost the model on data. It does not factor in the interdependence between our graphical model and the actions related to this graphical model. In the case of social sciences, if the graphical model is used to make a forecast which is then used to make a parametric action, the EM method does not take into account the interrelation between parameters of the graphical model denoted by $\theta$ and the ones of the action denoted by $\tau$. To measure the efficiency of the full set of parameters $(\theta, \tau)$, we rely on a cost function that can contain a regularizer to lead to smooth parameters. In finance, this cost function can be for instance a measure of the performance of our prediction taking into account the risk in our actions.

We present here a new approach that takes a radical point of view and focuses on the final efficiency of our model. Graphical model parameters are now estimated in terms of their efficiency for the cost function together with the action parameters rather than just their distributional fit to data. We rely on information geometric optimization to find a local optimum for our final cost function.
Our key features are the following:
{
\begin{itemize}
\setlength\itemsep{0em}
\item it is possible to directly optimize the cost function with a stochastic optimization approach and in particular the CMA-ES method;
\item this approach computes a natural gradient in the implicit Fisher information Riemannian manifold;
\item it is a good alternative to the EM approach as it does not fit at any cost the distribution of our graphical model to reality but rather looks at model efficiency measured by a loss function related to the problem under study;
\item the estimation of our model parameters is performed jointly with the action strategy parameters;
\item numerical results show that the overfitting issue of this approach due to local minima is less than the EM approach as it takes into account that the model dynamics is incorrectly specified.
\end{itemize}
}
The rest of the paper is organized as follows. Section~\ref{Settings} presents the overall framework and the resulting optimization problem. Section~\ref{IGO} provides some theoretical arguments that favor stochastic optimization approaches based on Information Geometric Optimization (IGO). Section~\ref{NumericalResults} discusses an example in finance. Our method outperforms traditional trend following methods by far. We finally conclude about some possible extensions and further experiments.

\section{Settings} \label{Settings}
Suppose we have determined an architecture for our Dynamic Bayesian Network. This may be inspired by combinations of simple network architectures such as those in Figures~\ref{DBN1},~\ref{DBN2},~\ref{DBN3}. This model is used for some specific goal. In our case, it is used to forecast some times series. But this is not our final objective! We are interested in using this forecast to perform a specific action. In the case of a financial market algorithmic trading strategy, we will use the forecast to make an informed decision and decide whether we should buy, sell or do nothing with a financial asset. To keep things simple, we will assume that when we take our decision, we target a pre-determined movement amplitude, materialized by a profit target and a stop loss level. The signal for the action is given by the difference between the forecast generated by our graphical model and the last price. To avoid false signals, we add an additional threshold parameter for our action and consider that there is an uptrend signal (respectively a downtrend signal) if the forecast is above the last price plus a threshold (respectively below the last price minus a threshold). Using a fixed price target, a stop loss and a threshold is quite realistic and is done by many practitioners as presented in various papers, e.g. \cite{Mauricio_2010}, \cite{Graziano_2014}, \cite{Stanley_2017}, or \cite{Vezeris_2018}. The profit target ensures that the strategy locks the profit in real money and is technically corresponding to a limit order while the stop loss, technically corresponding to a stop order, safeguards the overall risk by limiting losses whenever the market backfires and contradicts the presumed direction.

The price target, stop loss and threshold are three additional parameters that govern our action. These parameters are closely interconnected to our network parameters. If the network's prediction is accurate enough, we should aim for a large price movement, hence a large price target and a small stop loss. If it is not accurate enough, on the contrary, we should reduce the price target and increase the stop loss. We hold the position until either the trade reaches the profit target and exits in a profit or it touches the stop loss level and exits with a loss. The objective function to measure the performance of the model is the eponymous Sharpe ratio (introduced in \cite{Sharpe_1966}) that corresponds to the ratio of the strategy net return over its standard deviation. We add an $L1$ regularization term to ensure sparsity of the dynamic graphical model parameters. Although this approach may recall reinforcement learning theory, the dynamic nature of our graphical model makes the usage of standard reinforcement learning tools such as Q-learning inappropriate.

On this simple example, we clearly see that optimizing graphical model parameters in a standalone fashion is not optimal and that these parameters should be evaluated together with the action parameters at the light of our loss function as represented by figure~\ref{challenge}. We want to maximize the Sharpe ratio of our trading strategy over time. This optimization problem is non convex with potentially many local optima. Moreover, the binary nature of our action, i.e. we sell or buy as soon as our forecast hits the past level plus or minus the threshold, leads to discontinuities. 

The fundamental difference between EM method and ours is to jointly optimize the model and the action parameters, learning $(\theta, \tau)$. Because model and action parameters are interconnected, this approach well fits our initial desiderata. It optimizes our final criteria, the loss function, and not the maximum likelihood of our graphical model.  We will present in the next section some important results from information geometric optimization that allow us to tackle this complex optimization problem using a natural gradient descent.

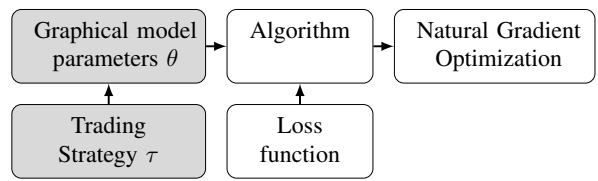
\begin{figure}[ht]
\centering
\resizebox {0.48\textwidth} {!} {
  \begin{tikzpicture}[every text node part/.style={align=center}]
    \node (1) [draw, rectangle,rounded corners,text width=2.6cm, fill=black!15] {\begin{tabular}{c} Graphical model \\  parameters $\theta$ \end{tabular}};
    \node (2) [below=0.3cm of 1, draw, rectangle,rounded corners,text width=2.6cm, fill=black!15] {\begin{tabular}{c}Trading \\  Strategy $\tau$ \\ \end{tabular}};

    \node (3) [right=0.3cm of 1, draw, rectangle, rounded corners, text width=1.9cm] { \begin{tabular}{c} Algorithm \\ \\ \end{tabular}};
    \node (4) [below=0.3cm of 3, draw, rectangle, rounded corners, text width=1.9cm] {\begin{tabular}{c}Loss \\ function  \\  \end{tabular}};

    \node (5) [right=0.3cm of 3, draw, rectangle, rounded corners, text width=2.7cm]  {\begin{tabular}{c}Natural Gradient\\ Optimization\\ \end{tabular}};

%

\path (2) edge [connect] (1)
	 (4) edge [connect] (3);

\path (1) edge [connect] (3)
	 (3) edge [connect] (5);

  \end{tikzpicture}
}\caption{Learning process for our graphical model. Decide an architecture for our graphical model with parameters $\theta$, combine with a strategy with parameters $\tau$ to create a systematic algorithm. Select a loss function. Use natural gradient optimization to find best parameters $( \theta^{\star}, \tau^{\star})$ }\label{challenge}
\end{figure}

\section{Information Geometric Optimization}\label{IGO}
\subsection{Natural gradient descent}
Compared to the usual gradient descent method, the natural gradient takes into account the intrinsic geometric structure of the underlying Riemmanian space. Since the seminal work of \cite{Amari_1998}, it has been widely adopted by the machine learning community \cite{Pascanu_2014}, \cite{Bernacchia_2018}.  In gradient descent, the usual update step is $\theta^{t+1} = \theta^{t} - \nu_t \frac{\partial U(\theta)}{\partial \theta^t}$ where $\theta^{t}$ is the optimization parameter at iteration $t$, $\nu_t $ the learning rate, and $U$ the cost function. The method of natural gradients replacement proposes to change the gradient descent with $
\theta^{t+1} = \theta^{t} - \nu_t G^{-1}(\theta^{t}) \frac{\partial U(\theta)}{\partial \theta^t}$ where $G(\theta^{t})$ is a matrix that defines the Riemannian metric in the space of the parameters. This is very powerful as it computes the fastest descent when looking at the Riemannian metric induced by the parameters. In distributions space, the Riemannian metric is associated with the Fisher information matrix \cite{Rao_1945,Jeffreys_1946}. In general, it is difficult to apply the natural gradient descent because we need to invert the Fisher information matrix, which is computationally heavy, except in some particular cases where there exists a closed form like for instance for exponential family distributions. We will follow the seminal works of \cite{Ollivier_2017} and \cite{Akimoto_2012b} and present a stochastic optimization method that performs numerically natural gradients efficiently.

We denote by $\X$ a metric space and associate to $\X$ a Borel $\sigma$-field and a measure on $\X$ denoted by $\borel$ and $\mea$ respectively. Typically $\X$ is $\R^d$ and $\mea$ is the Lebesgue measure on $\R^d$. We are interested in minimizing a $\mea$-measurable real value function $f: \X \to \R$. In order to make our optimization invariant with respect to various standard transformations of $f$, instead of minimizing $f$, it is better to find the minimum of a \emph{loss} (also referred to as an \emph{invariant cost}) function that is invariant to any strictly increasing transformation of $f$. We define $\Lf$ as the $\mea$-measured volume of the unit ball in the functional space $f$ with radius $f(x)$, that is 
$$
\Lf: x \mapsto \mea[y: f(y) \leq f(x)].
$$
Said differently, $\Lf$ is the measure of all the elements whose value is less or equal to $f(x)$. We are interested in finding the optimum of the loss function with respect to a family of probability distributions $P_{\theta}$ on $\X$, $\theta \in \Theta$. Compared to standard optimization, we define a \textit{quasi}-objective function, sometimes referred to as a utility function, $\Uf$, on the parameter space $\Theta$ defined as the expected value of our loss function $\Lf$ over the space of distributions to the power $2/d$: $P_{\theta}$, namely $ \Uf(\theta) = \EX{\Lf^{2/d}(X)}$. The intuition behind the exponent $2/d$ is to ensure that the utility is in a sense dimensionless and homogeneous to the square of a Euclidean distance. This is because in dimension $d$, a unit ball is homogeneous to a radius of dimension $d$, where as the square of a Euclidean distance is of dimension $2$. We will see in the practical example of a a quadratic function (see subsection \ref{sec:conv})  that this choice of exponent makes the explicit computation of the natural gradient easy.

In \cite{Ollivier_2017}, this utility function is defined as the opposite of the weighted quantile, $\Uf^{IGO}(x) = - w(P_{\theta^{t}}[y: f(y) \leq f(x)])$, where $w: [0, 1] \mapsto \R$ is non-increasing weight function. This choice is not easy to analyze as the quantile $P_{\theta^{t}}[y: f(y) \leq f(x)]$ depends on the current parameter $\theta^{t}$. At each iteration, the loss function $\Lf^{IGO}(x)$ changes making it hard to analyze. In \cite{Akimoto_2012b}, the utility (referred to as the quasi objective) function is defined as $\Uf^{NGA}=\EX{\nu[y: f(y) \leq f(x)]}$ where $\nu$ represents any monotonically increasing set function on $\borel$, i.e., $\nu(A) \leq \nu(B)$ for any $A$, $B \in \borel$ s.t.\ $A \subseteq B$. Our function is somehow simpler as the loss function is just the expected volume of all the elements whose value is less or equal to $f(X)$ for $X \sim P_{\theta}$ while the Utility function is similar but homogeneous to a Euclidean distanced squared.

We are interested in performing a natural gradient descent on a Riemannian manifold given by $(\Theta, G(\theta))$ equipped with the Fisher information metric $\FIM$. We choose the Fisher metric because it is the unique metric that does not depend on the choice of parameterization as explained in \cite{Amari2007book}. The natural gradient is easy to obtain and is just the gradient with respect to the Fisher information matrix metric. 
One can ``easily'' compute it and get that it is provided by the product of the inverse of the Fisher information matrix $\FIM$ with the standard or \textit{vanilla} gradient $\nabla \Uf(\theta)$ of the function to minimize. We can then apply a natural gradient descent as follows:
\begin{equation}
\theta^{t+1} = \theta^{t} - \nu_{t} \FM_{\theta^{t}}^{-1} \nabla \Uf(\theta^{t}), \label{equation:natural_gradient-f}
\end{equation}
where $\nu_{t}$ is the learning rate. Compared to the vanilla gradient, we have an additional term given by $\FM_{\theta^{t}}^{-1}$. We present below an algorithm that allows performing the natural gradient descent in a general case. To make things clearer, we will present the algorithm in the case where we can explicitly compute the inverse of the Fisher information matrix and the  standard gradient $\nabla \Uf(\theta)$. We will call this the \textit{Closed form Natural Gradient} or (NGD). We will then present the Monte-Carlo NGD that allows performing the natural gradient descent in a general framework efficiently.

Historically, this approach of computing a natural gradient in the space of distributions was done empirically and without strong theoretical arguments in the Evolution Strategies community. The most prominent approach belonging to this work which is indeed a natural gradient method is the CMA-ES algorithm \cite{HansenOstermeier_2001}. Although CMA-ES has been state of the art in this line of research as shown by the various benchmarks of the \href{http://coco.gforge.inria.fr/}{Comparing Continuous Optimisers} (COCO) INRIA platform for ill-posed functions, it was only later, after the works of \cite{Akimoto_2010}, \cite{Glasmachers_2010}, \cite{Akimoto_2012a}, \cite{Akimoto_2012b}, and the deep theoretical study of \cite{Ollivier_2017}) that the community realizes that this algorithm performs a natural gradient descent. In the following we adapt these theoretical proofs to the setting of our graphical model learning problem. The CMA-ES algorithm has led to a large number of papers and articles and we refer the interesred reader to \cite{HansenOstermeier_2001}, \cite{Auger_2004}, \cite{Igel_2007}, \cite{Auger_2009}, \cite{Hansen_2011}, \cite{Auger_2012}, \cite{Hansen_2014}, \cite{Auger_2015}, \cite{Auger_2016}, \cite{Ollivier_2017} and \cite{Hansen_2018} to cite a few.

\subsection{Closed Form (CF) NGD Algorithm}
In order to study the key property of the CMA-ES, we focus on the case where $\X = \R^{d}$ and suppose the measure $\mea$ to be the Lebesgue measure on $\R^{d}$, denoted $\leb$, while $\borel$ denotes the Borel $\sigma$-field $\B^{d}$ on $\R^{d}$. 

Concerning the sampling distribution, we will work with the Gaussian distribution as it has the maximum entropy among distributions with known first two moments. Our distribution space, denoted by $P_{\theta}$ is parameterized by the parameter $\theta \in \Theta$. We adopt the traditional moment parameters, that is the mean vector $\mu(\theta)$, which is in $\R^{d}$ and the covariance matrix $\Sigma(\theta)$, which is a symmetric and positive definite matrix of dimension $d \times d$, which leads that $P_\theta$ is indeed the normal written as $\mathcal{N}(\mu(\theta),\Sigma(\theta))$.

The loss function $\Lf(x)$ is naturally associated to the Lebesgue measure $\leb$ and defined as $\Lf(x) = \leb[y: f(y) \leq f(x)].$ 
In order to optimize our function, we look at the infimum of 
$$
\Uf(\theta) = \EX{\Lf^{2/d}(X)}.
$$ 

If there is only one optimum, this translates into finding the domain $\Theta$ where the mean vector equals the global minimum of $f$ while the covariance matrix is null. 

It is worth noticing that the choice of the moment parameterization of Gaussian distributions does affect the behavior of the natural gradient update \eqref{equation:natural_gradient-f} with finite learning rate $\nu^{t}$. However, the steepest direction of $\Uf$ on the statistical manifold $\Theta$ is invariant under the choice of parameterization as explained in \cite{Ollivier_2017}.  
Using explicit computation for the normal, we get the natural gradient descent summarized by proposition \ref{prop:natural_gradient} below.

\begin{proposition}\label{prop:natural_gradient}
If the loss function is squared integrable ($\EX{\Lf^{2}(X)} < \infty$), using two different learning rates $(\nu_{\mu}^{t}, \nu_{\Sigma}^{t} )$  for the mean vector and the covariance, the  natural gradient descent \eqref{equation:natural_gradient-f} writes:
\begin{equation}\label{equation:natural_gradient}
\begin{split}
\mu^{t+1} = \mu^{t} - \nu_{\mu}^{t} \delta \mu^{t}, \\
\Sigma^{t+1} = \Sigma^{t} - \nu_{\Sigma}^{t} \delta \Sigma^{t},
\end{split}
\end{equation}
\begin{equation*}
\begin{split}
\text{with } \, \delta \mu^{t} &= \E_{X \sim P_{\theta^{t}}}[\Lf^{2/d}(X) (X - \mu^{t})] \\
\delta \Sigma^{t} &= \E_{X \sim P_{\theta^{t}}}\bigl[ \Lf^{2/d}(X) \bigl((X - \mu^{t})(X - \mu^{t})^\trans - \Sigma^{t} \bigr) \bigr]
\end{split}
\end{equation*}
\end{proposition}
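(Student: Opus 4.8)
The plan is to specialize the abstract update \eqref{equation:natural_gradient-f} to the Gaussian family $\mathcal{N}(\mu,\Sigma)$ in moment coordinates by computing, at a generic parameter $\theta=(\mu,\Sigma)$, the vanilla gradient $\nabla \Uf(\theta)$ and the Fisher information matrix $\FIM$ explicitly, and then forming the product $\FIM^{-1}\nabla\Uf(\theta)$ block by block; evaluating the result at $\theta=\theta^{t}$ yields \eqref{equation:natural_gradient}.

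First I would obtain the vanilla gradient through the score (log-derivative) identity. Since $\Lf^{2/d}$ does not depend on $\theta$, differentiating under the integral sign gives
$$
\nabla_{\theta}\Uf(\theta) = \int \Lf^{2/d}(x)\,\nabla_{\theta}p_{\theta}(x)\dx = \EX{\Lf^{2/d}(X)\,\nabla_{\theta}\log p_{\theta}(X)}.
$$
For the Gaussian log-density one has $\nabla_{\mu}\log p_{\theta}(x)=\Sigma^{-1}(x-\mu)$ and $\nabla_{\Sigma}\log p_{\theta}(x)=\tfrac12\Sigma^{-1}\bigl((x-\mu)(x-\mu)^{\trans}-\Sigma\bigr)\Sigma^{-1}$, so the mean and covariance blocks of $\nabla\Uf(\theta)$ are immediately read off as expectations weighted by $\Lf^{2/d}$.

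Next I would record the Fisher information matrix in this parameterization. The classical fact is that $\FIM$ is block diagonal in $(\mu,\Sigma)$ --- the cross term vanishes because it involves an odd Gaussian moment --- with mean block $\FM_{\mu}=\Sigma^{-1}$ and, after vectorizing the symmetric matrix $\Sigma$, covariance block $\FM_{\Sigma}=\tfrac12\bigl(\Sigma^{-1}\otimes\Sigma^{-1}\bigr)$. This last identity is where the hypothesis $\EX{\Lf^{2}(X)}<\infty$ enters, to justify the interchange of expectation and differentiation and to ensure the relevant fourth moments exist; deriving it rests on Isserlis' (Wick's) formula for Gaussian fourth moments. I expect this to be the main obstacle, since one must track the $\vect$ operator and the symmetrization of $\Sigma$ carefully to land on the clean Kronecker form.

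Finally I would multiply, exploiting block diagonality to treat the two parameters independently. For the mean, $\FM_{\mu}^{-1}=\Sigma$ cancels the $\Sigma^{-1}$ in the score, leaving $\tilde\nabla_{\mu}\Uf(\theta)=\EX{\Lf^{2/d}(X)(X-\mu)}$. For the covariance, write $M=\EX{\Lf^{2/d}(X)\bigl((X-\mu)(X-\mu)^{\trans}-\Sigma\bigr)}$, so that $\vect\bigl(\nabla_{\Sigma}\Uf(\theta)\bigr)=\tfrac12\bigl(\Sigma^{-1}\otimes\Sigma^{-1}\bigr)\vect(M)$; applying $\FM_{\Sigma}^{-1}=2\bigl(\Sigma\otimes\Sigma\bigr)$ and using $(\Sigma\otimes\Sigma)(\Sigma^{-1}\otimes\Sigma^{-1})=I$, the factors $\tfrac12$ and $2$ cancel and the natural gradient in $\Sigma$ is exactly $M$. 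Setting $\theta=\theta^{t}$ identifies these two natural gradients with $\delta\mu^{t}$ and $\delta\Sigma^{t}$, and substituting into \eqref{equation:natural_gradient-f} with the two separate learning rates $\nu_{\mu}^{t},\nu_{\Sigma}^{t}$ gives \eqref{equation:natural_gradient}; any residual scalar from the differentiation conventions is simply absorbed into the learning rates.
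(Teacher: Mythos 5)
Your proposal is correct and follows essentially the same route as the paper's proof: differentiate under the integral (justified by $\EX{\Lf^{2}(X)} < \infty$, hence $\EX{\Lf^{2/d}(X)} < \infty$) to express $\nabla \Uf(\theta)$ as a score-weighted expectation, invoke the block-diagonal Gaussian Fisher matrix with blocks $\Sigma^{-1}$ and $\tfrac12(\Sigma^{-1}\otimes\Sigma^{-1})$, and multiply block by block so the Kronecker factors cancel. The only minor misattribution is your claim that the square-integrability hypothesis is needed for the Gaussian fourth moments in the Fisher block --- those always exist for a nondegenerate Gaussian; the hypothesis serves solely to license the interchange of differentiation and expectation, exactly as in the paper.
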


\begin{proof}
Refer to \ref{proof0} in the supplementary material for more details.
\end{proof}

Equations \eqref{equation:natural_gradient} defines the closed-form or deterministic natural gradient descent (NGD) method, which is an ideal case where we know the optimum solution. This ideal case is useful for deriving fruitful properties of our optimization method but is useless in practice.

\subsection{Monte-Carlo (MC) NGD Algorithm}\label{sec:stoc}
We now turn to the real algorithm that is efficient in real-world situations. It tackles the issue of unknown and untractable gradient for our quasi objective function: $\nabla \Uf(\theta)$. The last resort solution is to estimate the gradient with  Monte Carlo simulations. We approximate the natural gradient and simulate the natural gradient descent with the algorithm~\ref{MC-NGD}.

\begin{algorithm}
\caption{Monte-Carlo NGD Algorithm}\label{MC-NGD}
	\begin{algorithmic} 
	\WHILE{Not Converged}											
		\STATE Simulate n random normal vectors denoted by $z_i$ according $\mathcal{N}(0, Id_d)$, infer $x_i = m_t +\sqrt{\Sigma^t} z_i$, and evaluate their values $f(x_{1}), \dots, f(x_{n})$
		\STATE Estimate the Loss function $\Lf(x_{i})$ as
		\vspace{-0.2cm}
		\begin{equation*}
			\widehat{\Lf}(x_{i}) = \frac{ \sqrt{ (2 \pi)^{d} \det(\Sigma)}}{n} \!\!\!\!\!\!\!\!\! \sum_{j: f(x_{j}) \leq f(x_{i})} \exp\biggl( \frac{\norm{z_{j}}^{2}}{2} \biggr)
		\end{equation*}
		\vspace{-0.2cm}
		\STATE  Estimate the natural gradient $\delta \mu^{t}$ and $\delta \Sigma^{t}$ as
		\vspace{-0.35cm}
		\begin{equation}\label{equation:natural_gradient-est}
		\begin{split}
			\widehat{\delta \mu^{t}} &= \frac{1}{n} \sum_{i=1}^{n} \left(\widehat{\Lf}(x_{i})\right)^{2/d} (x_{i} - \mu^{t})\\
			\widehat{\delta \Sigma^{t}} &= \frac{1}{n}  \sum_{i=1}^{n} \left(\widehat{\Lf}(x_{i})\right)^{2/d}  \bigl((x_{i} - \mu^{t})(x_{i} - \mu^{t})^\trans  - \Sigma^{t}\bigr)\enspace.
		\end{split}
		\end{equation}
		\vspace{-0.35cm}
		\STATE  Update parameters with ngd as \\
			\qquad \hspace{0.55cm}  $\mu^{t+1} = \mu^{t} - \nu_{\mu} \widehat{\delta \mu^{t}}$ \\
			\qquad and $\Sigma^{t+1} = \Sigma^{t} - \nu_{\Sigma} \widehat{\delta \Sigma^{t}}$.
	\ENDWHILE
	\end{algorithmic}
\end{algorithm}

This algorithm is the Monte Carlo version of our NGD algorithm. In \cite{Ollivier_2017}, this is referred as the stochastic NGD algorithm. Compared to the closed-form NGD algorithm, we evaluate $\nabla \Uf(\theta)$ thanks to a Monte Carlo simulation. This makes this algorithm stochastic in nature. In order to have the same behavior each time we run this algorithm, we freeze random seeds to ensure similar results for the normal random numbers. 
The core of the algorithm is to generate $n$ samples $x_{i}$ from a multivariate Gaussian distribution $\mathcal{N}(\mu^{t}, \Sigma^{t})$, evaluate their value, computes the loss functions $\Lf(x_{i})$ afterwards. The estimates $\widehat{\Lf}(x_{i})$ are obtained as follows. In order to get the intuition of the Monte Carlo approximation, recall that the loss function is given by:
\begin{eqnarray*}
\Lf(x)&=&  \int{\mathbf{1}_{\{f(y) \leq f(x)\}}} \dd y ) \\
& = & \int \frac{\mathbf{1}_{\{f(y) \leq f(x)\}}}{p_{\theta^{t}}(y)} p_{\theta^{t}}(y) \dd y 
\end{eqnarray*}
In our Monte-Carlo algorithm, a first step consists in computing the integral as the corresponding Riemann sum:
\begin{equation}
\widehat{\Lf}(x) =  \frac{1}{n}\sum_{j = 1}^{n} \frac{\mathbf{1}_{\{f(x_{j}) \leq f(x)\}}}{p_{\theta^{t}}(x_{j})} \label{equation:mc-nu}
\end{equation}
For the multivariate Gaussian, the probability weights are given by $p_{\theta^{t}}(x_{j})  = 1 / \sqrt{(2\pi)^{d}\det(\Sigma)} \exp\bigl( -\norm{z_{j}}^{2}/ 2 \bigr)$ where the $z_i$ are standardized normal $\mathcal{N}(0, Id_d)$ draws. It is worth noticing that these weights are not the traditional ones of the CMA-ES algorithm. CMA-ES relies rather on logarithmic weights \cite{Hansen_2014} and takes a fraction (denoted by $\mu$, that means something different from our mean vector) out of $\lambda$ candidates. In this setting, we use all the candidates to compute the new mean and covariance. From a theoretical point of view, the weights introduced  in \cite{Akimoto_2012b} are more meaningful and make the study of the property of this stochastic optimization algorithm simpler. The next step is to estimate the natural gradient according to equations \eqref{equation:natural_gradient} as follows:
\begin{eqnarray}
& & \hspace{-0.5cm} \E_{X \sim P_{\theta^{t}}}\left[\Lf^{2/d}(X) (X - \mu^{t})\right] \nonumber\\
& & = \frac{1}{n} \sum_{i=1}^{n}  \left(\widehat{\Lf}(x_{i})\right)^{2/d} (x_{i} - \mu^{t})\\
& & \hspace{-0.5cm} \E_{X \sim P_{\theta^{t}}}\left[ \Lf^{2/d}(X) \left((X - \mu^{t})(X - \mu^{t})^\trans - \Sigma^{t} \right) \right] \nonumber\\
& & = \frac{1}{n}  \sum_{i=1}^{n}  \left(\widehat{\Lf}(x_{i})\right)^{2/d}  \left((x_{i} - \mu^{t})(x_{i} - \mu^{t})^\trans  - \Sigma^{t}\right) \nonumber \\
\end{eqnarray}

In the final natural gradient update, it is important to have both the mean and the covariance learning rates not too large. We will see in the next section the reason of the limitation of the learning rates when looking at the positivity of the covariance matrix at each step as well as when examining convergence properties.

\subsection{Basic Properties}\label{Properties}
\noindent\textbf{Invariance. }
Invariance properties are fundamental for the efficiency of optimization algorithm. Our stochastic optimization has two major invariance properties: under monotonic transformations of the objective function and under affine transformations of the search space. Invariance under monotonic transformation means the algorithm performs equally well on the function $f$ and on $g \circ f$, the composition of $f$ with any function $g$ strictly increasing. This explains why it performs well on ill-conditioned functions whereas conventional gradient methods like Newton method relies on convexity properties of the objective function and handles this non-convex problems less well. This invariance property is a direct consequence of the loss function. Invariance under affine transformations of the search space is the key idea behind the Newton's method. The adapation of the covariance matrix at each step explains the performance of this algorithm on ill-conditioned objective functions. 

\noindent\textbf{Positivity. }
The covariance matrix must be positive definite and symmetric at each iteration. Although \cite{NIPS2016_6457} recently took the additional constraint to Cholesky decompose the covariance matrix, we can explicitly compute the condition on the learning rate $\nu_{\Sigma}^{t}$ to ensure that in our NGD algorithms the covariance matrix always remains positive definite symmetric. Proposition~\ref{prop:positivity} shows that the learning rate has too be small enough. Should the learning rate be larger than the critical value, the covariance matrix would be be non positive. 

\begin{proposition}\label{prop:positivity}
Provided that the matrix $\Sigma^{0}$ and the various matrices $I_{d} - \nu_{\Sigma}^{t} \sqrt{\Sigma^{t}}^{-1} \delta \Sigma^{t} \sqrt{\Sigma^{t}}^{-1}$ are positive definite, the covariance matrix  $\Sigma^{t}$ remains positive definite (and symmetric) for each $t$ in the closed form NGD algorithm. Should the matrix $I_{d} - \nu_{\Sigma}^{t} \sqrt{\Sigma^{t}}^{-1} \delta \Sigma^{t} \sqrt{\Sigma^{t}}^{-1}$ not be positive (with eigen values negative or null), $\Sigma^{t+1}$ would not be positive. In the Monte-Carlo NGD algorithm, the condition is modified into $I_{d}-\nu_{\Sigma}^{t} \sqrt{\Sigma^{t}}^{-1} \widehat{\delta \Sigma^{t}} \sqrt{\Sigma^{t}}^{-1}$ should be positive definite for any $t$.
\end{proposition}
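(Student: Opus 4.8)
The plan is to exhibit $\Sigma^{t+1}$ as a congruence transform of the bracketed matrix and then invoke the fact that congruence by an invertible matrix preserves positive definiteness. Starting from the closed-form update $\Sigma^{t+1} = \Sigma^{t} - \nu_{\Sigma}^{t} \delta \Sigma^{t}$ of Proposition~\ref{prop:natural_gradient}, I would argue by induction on $t$ that $\Sigma^{t}$ is symmetric positive definite, so that its unique symmetric positive definite square root $\sqrt{\Sigma^{t}}$ exists and is invertible. The base case is exactly the hypothesis on $\Sigma^{0}$.

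For the inductive step, write $B = \sqrt{\Sigma^{t}}$ and factor
$$
\Sigma^{t+1} = B\bigl(I_{d} - \nu_{\Sigma}^{t} B^{-1} \delta \Sigma^{t} B^{-1}\bigr) B,
$$
which is checked by expanding using $BB = \Sigma^{t}$ and $BB^{-1} = I_{d}$. Denote the bracketed factor by $M^{t} = I_{d} - \nu_{\Sigma}^{t} \sqrt{\Sigma^{t}}^{-1} \delta \Sigma^{t} \sqrt{\Sigma^{t}}^{-1}$. Since $(X-\mu^{t})(X-\mu^{t})^{\trans}$ and $\Sigma^{t}$ are symmetric, $\delta \Sigma^{t}$ is symmetric, and since $\sqrt{\Sigma^{t}}^{-1}$ is symmetric, $M^{t}$ is symmetric as well. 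Then for any $v \neq 0$, putting $w = Bv \neq 0$ (as $B$ is invertible), one has $v^{\trans} \Sigma^{t+1} v = w^{\trans} M^{t} w > 0$ whenever $M^{t}$ is positive definite, while symmetry of $\Sigma^{t+1}$ is immediate from $(B M^{t} B)^{\trans} = B M^{t} B$. This closes the induction and gives the sufficiency part.

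The necessity direction — that a non-positive $M^{t}$ forces $\Sigma^{t+1}$ to be non-positive — follows from the same factorization together with Sylvester's law of inertia: congruence by the invertible $B$ leaves the inertia of $M^{t}$ unchanged, so if $M^{t}$ has a negative or zero eigenvalue, then $\Sigma^{t+1}$ inherits a negative or zero eigenvalue and fails to be positive definite. The Monte-Carlo case is verbatim the same argument with $\widehat{\delta \Sigma^{t}}$ replacing $\delta \Sigma^{t}$; one need only observe that $\widehat{\delta \Sigma^{t}}$ is again symmetric, being an average of the symmetric matrices $(x_{i} - \mu^{t})(x_{i} - \mu^{t})^{\trans} - \Sigma^{t}$, so the congruence $\Sigma^{t+1} = \sqrt{\Sigma^{t}}\, \widehat{M}^{t} \sqrt{\Sigma^{t}}$ carries over unchanged.

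I do not anticipate a serious obstacle: the entire proposition reduces to the single algebraic observation that $\Sigma^{t+1}$ is congruent to $M^{t}$ via $\sqrt{\Sigma^{t}}$. The only points requiring minor care are the existence of the symmetric square root (standard for symmetric positive definite matrices) and the appeal to inertia-preservation for the converse statement. The learning-rate constraint is then precisely the requirement that $M^{t}$ be positive definite, which is exactly what turns the statement into a genuine upper bound on $\nu_{\Sigma}^{t}$.
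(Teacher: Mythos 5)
Your proof is correct and follows essentially the same route as the paper: the same congruence factorization $\Sigma^{t+1} = \sqrt{\Sigma^{t}}\,\bigl(I_{d} - \nu_{\Sigma}^{t} \sqrt{\Sigma^{t}}^{-1} \delta \Sigma^{t} \sqrt{\Sigma^{t}}^{-1}\bigr)\sqrt{\Sigma^{t}}$, the same induction on $t$, and the same preservation of positive definiteness under congruence by the invertible square root. If anything, you are slightly more rigorous than the paper on the converse direction, where you invoke Sylvester's law of inertia explicitly while the paper merely asserts that a non-positive inner factor yields a non-positive $\Sigma^{t+1}$.
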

\begin{proof}
Refer to \ref{proof1} in the supplementary material for more details.
\end{proof}

\begin{remark}
In case, the matrix $\sqrt{\Sigma^{t}}^{-1} \delta \Sigma^{t} \sqrt{\Sigma^{t}}^{-1}$ for the Closed Form case and $\sqrt{\Sigma^{t}}^{-1} \widehat{\delta \Sigma^{t}} \sqrt{\Sigma^{t}}^{-1}$ for the Monte-Carlo case has some positive eigen values, this result can be re-expressed trivially in terms of eigenvalues and state that the learning rate is bounded by $\nu_{\Sigma}^{t} < 1/ \lambda_{1}(\sqrt{\Sigma^{t}}^{-1} \delta \Sigma^{t}   \sqrt{\Sigma^{t}}^{-1})$ for the Closed Form case and $\nu_{\Sigma}^{t} < 1/ \lambda_{1}(\sqrt{\Sigma^{t}}^{-1} \widehat{\delta \Sigma^{t}} \sqrt{\Sigma^{t}}^{-1})$ for the Monte Carlo case where $\lambda_{1}(M)$ denotes the largest eigenvalue of a matrix $M$. This is quite intuitive. The greater the critical largest eigenvalue is, the bigger the gradient is. Hence the learning rate should not be too large to go too far. Reciprocally the smaller the critical largest eigenvalue is, the smaller the gradient is. Should the matrices previously mentioned have only negative values, there would be no bound on the learning rate. This result is already presented in \cite{Akimoto_2012b} but without mentioning that the two matrices shall not be necessarily positive. We will see in the trivial case of the quadratic function in subsection \ref{sec:conv}, that the matrix in the Closed Form case can be explicitly computed and given by $\kappa \, \Sigma^{t} \HMat \Sigma^{t}$ with $\kappa$ given by proposition \ref{prop:convergence}. Should $H$ not be positive, the matrix $\sqrt{\Sigma^{t}}^{-1} \delta \Sigma^{t} \sqrt{\Sigma^{t}}^{-1}$ would not be neither.
\end{remark}

\noindent\textbf{Convergence. }
Although the gradient estimator defined in \eqref{equation:natural_gradient-est} may not be necessarily unbiased, yet it converges to the true natural gradient as proved by proposition~\ref{prop:consistency}. This implies in particular that the Monte-Carlo NGD  approximates well the closed-form NGD when we increase the sample size $n$. Let us denote by $\nabla: U \mapsto \FIM^{-1} \nabla U$ the natural gradient operator, and by $\tnabla_{{\theta}^{n}} : \widehat{U} \mapsto \FIM^{-1} \nabla \widehat{U}$ the Monte Carlo estimate of the natural gradient.

\begin{proposition}\label{prop:consistency}
Let $X_{1}, \dots, X_{n}$ be independent and identically distributed (i.i.d) random vectors following $P_{\theta}$. Let us denote $\vect(M)$ the vectorization of matrix $M$, that is the vector obtained by stacking matrix columns on top of one another. Let $\Lfh(x)$ and $\tnabla{\theta}^{n} = [(\widehat{\delta \mu^{t}})^{\trans}, \vect(\widehat{\delta \Sigma^{t}})^{\trans}]^{\trans}$ be the loss function estimator given by \eqref{equation:mc-nu} and the natural gradient estimator  where in \eqref{equation:natural_gradient-est}  the $x_{1}, \dots, x_{n}$ are replaced by $X_{1}, \dots, X_{n}$. 

If the loss function is squared integrable: $\E[\Lf^{2}(X)] < \infty$, then the gradient estimator of our utility function converges almost surely: $\tnabla_{{\theta}^{n}}\widehat{U}  \asto \nabla \Uf(\theta)$, where $\asto$ represents almost sure convergence. 
\end{proposition}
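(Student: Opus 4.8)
The plan is to prove the two component-wise statements $\widehat{\delta \mu^{t}} \asto \delta \mu^{t}$ and $\widehat{\delta \Sigma^{t}} \asto \delta \Sigma^{t}$ separately. Since almost-sure convergence is stable under stacking finitely many coordinates, and since Proposition~\ref{prop:natural_gradient} identifies the true natural gradient as $\nabla \Uf(\theta) = [(\delta \mu^{t})^{\trans}, \vect(\delta \Sigma^{t})^{\trans}]^{\trans}$ (the Fisher-information factor $\FIM^{-1}$ being already absorbed into the closed forms $\delta\mu^{t},\delta\Sigma^{t}$), these two convergences immediately give $\tnabla_{\theta^{n}}\widehat{U} \asto \nabla \Uf(\theta)$. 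I will carry out the argument for the mean component; the covariance case is identical with $(X-\mu^{t})$ replaced by $(X-\mu^{t})(X-\mu^{t})^{\trans} - \Sigma^{t}$.

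First I would split the error into a plug-in part (from estimating $\Lf$) and a genuine Monte-Carlo part:
\begin{equation*}
\widehat{\delta \mu^{t}} - \delta \mu^{t} = \underbrace{\frac{1}{n}\sum_{i=1}^{n}\Bigl[\bigl(\Lfh(X_{i})\bigr)^{2/d} - \bigl(\Lf(X_{i})\bigr)^{2/d}\Bigr](X_{i}-\mu^{t})}_{A_{n}} + \underbrace{\Bigl[\frac{1}{n}\sum_{i=1}^{n}\bigl(\Lf(X_{i})\bigr)^{2/d}(X_{i}-\mu^{t}) - \delta \mu^{t}\Bigr]}_{B_{n}}.
\end{equation*}
The term $B_{n}$ is a true average of i.i.d.\ vectors, since $\Lf(X_{i})$ depends on $X_{i}$ alone, so the classical strong law yields $B_{n} \asto 0$ as soon as $\E[(\Lf(X))^{2/d}\norm{X-\mu^{t}}] < \infty$. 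This integrability follows from Hölder's inequality together with the hypothesis $\E[\Lf^{2}(X)] < \infty$ (note $\E[((\Lf)^{2/d})^{d}] = \E[\Lf^{2}] < \infty$, so $(\Lf)^{2/d} \in L^{d}$) and the finiteness of all Gaussian moments of $\norm{X-\mu^{t}}$.

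The heart of the proof, and the main obstacle, is controlling $A_{n}$: the inner estimator $\Lfh(X_{i})$ is built from the entire sample $X_{1},\dots,X_{n}$ (including $X_{i}$), so $A_{n}$ is not a sum of independent summands but a $V$-statistic-type object with a nonlinear outer transform $s\mapsto s^{2/d}$. The strategy is to first establish consistency of the inner estimator uniformly enough to decouple the two averages. Because $\Lfh(x)$ and $\Lf(x)$ depend on $x$ only through the level $t = f(x)$, I set $\widehat{F}(t) = \frac{1}{n}\sum_{j=1}^{n} p_{\theta^{t}}(X_{j})^{-1}\mathbf{1}_{\{f(X_{j})\leq t\}}$ and $F(t) = \leb(\{f\leq t\})$, so $\Lfh(x) = \widehat{F}(f(x))$ and $\Lf(x) = F(f(x))$. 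For each fixed $t$ the summands are i.i.d.\ with mean $\int \mathbf{1}_{\{f(y)\leq t\}}\dd y = F(t)$ (the importance-sampling identity), so the strong law gives $\widehat{F}(t) \asto F(t)$ pointwise. Since $\widehat{F}$ and $F$ are both non-decreasing in $t$, pointwise a.s.\ convergence on a countable dense set upgrades, by the classical monotone (Glivenko--Cantelli) argument handling the at most countably many jumps of $F$, to uniform convergence on every compact level set: $\sup_{f(x)\leq T}\abs{\Lfh(x) - \Lf(x)} \asto 0$ for each finite $T$.

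It then remains to convert this into $A_{n}\asto 0$. Applying the continuous mapping $s\mapsto s^{2/d}$ transfers the uniform bound to the powers, and I would split the outer average over the bulk $\{f(X_{i})\leq T\}$ and the tail $\{f(X_{i})>T\}$: on the bulk the uniform bound multiplied by $\frac{1}{n}\sum_{i}\norm{X_{i}-\mu^{t}}$ (bounded by the strong law) vanishes, and on the tail a uniform-integrability / dominated-convergence argument, letting $T\to\infty$, makes the contribution uniformly negligible, the square-integrability hypothesis and the finiteness of Gaussian moments supplying the dominating bound. Hence $A_{n}\asto 0$, and with $B_{n}\asto 0$ this gives $\widehat{\delta \mu^{t}} \asto \delta \mu^{t}$; the same scheme on the quadratic integrand yields $\widehat{\delta \Sigma^{t}} \asto \delta \Sigma^{t}$ (integrability now via $\E[(\Lf)^{2/d}\norm{X-\mu^{t}}^{2}] < \infty$, again from Hölder and Gaussian moments), completing the proof. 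The delicate point throughout is the coupling in $A_{n}$; it is the monotonicity of the weighted empirical loss $\widehat{F}$ that makes uniform control tractable despite the unbounded importance weights $p_{\theta^{t}}^{-1}$.
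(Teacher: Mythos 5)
Your architecture is sound and genuinely different from the paper's. The paper never uniformizes: it first proves pointwise a.s.\ consistency of $\widehat{\Lf^{n}}^{2/d}(x)$ via the strong law applied to the importance-sampling identity, writes the estimator as an i.i.d.\ average $\frac{1}{n}\sum_i \Lf^{2/d}(X_i)\,\tnabla L(\theta;X_i)$ (which converges to $\nabla\Uf(\theta)$ by the SLLN) plus remainder terms, and kills the remainders with a dedicated ``Cesaro convergence'' lemma ($f_n \asto 0$ pointwise implies $\frac{1}{n}\sum_i f_n(X_i)\asto 0$) combined with Cauchy--Schwarz and the fact that $\frac{1}{n}\sum_i \norm{\tnabla L(\theta;X_i)}^2 \to \Tr(\FIM^{-1})<\infty$. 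Your $B_n$ coincides with the paper's leading term (centered), and your monotone weighted Glivenko--Cantelli argument for $\widehat F$ is the substitute for the Cesaro lemma. On the bulk $\{f\le T\}$ your substitute is arguably \emph{more} rigorous: the paper's Cesaro lemma is stated for a deterministic sequence of functions, yet is applied to the sample-dependent $f_n=\widehat{\Lf^{n}}^{2/d}-\Lf^{2/d}$ --- exactly the V-statistic coupling you single out --- so the paper's own treatment of the crux is not airtight, whereas your uniform bound over compact level sets genuinely decouples the inner estimator from the outer average.

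However, your tail step contains a real gap, and it is the hardest part of your route. On $\{f(X_i)>T\}$ you must control $\frac{1}{n}\sum_i \mathbf{1}_{\{f(X_i)>T\}}\,\widehat F(f(X_i))^{2/d}\norm{X_i-\mu^t}$ uniformly in $n$, and the ``uniform-integrability / dominated-convergence argument'' you invoke has no obvious dominating bound: the natural envelope $\widehat F(+\infty)=\frac{1}{n}\sum_j 1/p_{\theta^{t}}(X_j)$ satisfies $\E[1/p_{\theta^{t}}(X)]=\int \dd y = +\infty$, so the weighted empirical measure has infinite expected total mass and plain DCT/UI simply does not apply to the $\widehat F$ part of the tail (the $F$ part is fine, as you say). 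A repair is possible but is a genuine argument, not a one-liner: for $d\ge 2$ use subadditivity of $s\mapsto s^{2/d}$ to split off the diagonal self-term $\bigl(n\,p_{\theta^{t}}(X_i)\bigr)^{-2/d}$, which is summable against $\norm{X_i-\mu^t}$ when $\int p_{\theta^{t}}(x)^{1-2/d}\norm{x-\mu^t}\,\dd x<\infty$ (true for $d>2$), and bound the leave-one-out part through its conditional expectation via Jensen, since $2/d\le 1$. Relatedly, your Hölder step for $B_n$ pairs $\Lf^{2/d}\in L^d$ with $\norm{X-\mu^t}\in L^{d/(d-1)}$ and therefore breaks at $d=1$ (it would require an $L^\infty$ bound on a Gaussian); the paper's proof has the mirror-image defect in its step $\E[\Lf^{4/d}(X)]<\infty$, which also does not follow from $\E[\Lf^2(X)]<\infty$ when $d=1$. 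So: right decomposition, right identification of the coupling as the heart of the proof, and a sound uniformization device on the bulk, but the tail control is asserted rather than proved, and as stated the asserted domination fails.
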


\begin{proof}
Refer to \ref{proof2} in the supplementary material. This result was proved as early as 2011 by \cite{Ollivier_2017} although their final manuscript was published a few years latter. Our approach though is quite original as we use the fundamental property of almost sure convergence that implies also Cesaro convergence. As Monte Carlo estimates are in nature Cesaro sums, this remarkable property ensures convergence of the MC NGD algorithm. Intrinsically, it means our Monte Carlo estimate is consistent. This is the theoretical foundation for our algorithm. The square integrability condition $\E[\Lf^{2}(X)] < \infty$ is crucial to guarantee the convergence of the algorithm.
\end{proof}

\subsection{Convergence Properties}\label{sec:conv}
Since the MC NGD algorithm is quite complex, we are forced to simplify the problem and look at convergence properties only for the CF NGD algorithm. We additionally restrict ourselves to the canonical convex-quadratic function $f(x) = \frac{1}{2} x^\mathrm{T} \HMat x$, as introduced in \cite{Auger_2004}, where $\HMat$ is a positive definite symmetric matrix. We also make assumptions on our learning rates: there exist $\nummin > 0$ and $\nucmin > 0$ such that
\begin{gather}
\nummin \leq \nu_{\mu}^{t} \lambda_{1}( (\Sigma^{t})^{-1} \delta \Sigma^{t}) \leq 1,\label{equation:learning_rate_cond_mu}\\
\nucmin \leq \nu_{\Sigma}^{t} \lambda_{1}( (\Sigma^{t})^{-1} \delta \Sigma^{t}) \leq 1/2.\label{equation:learning_rate_cond_sigma}
\end{gather}
A way to satisfy these inequalities is to set $\nu_{\mu}^{t} = \alpha_{\mu} / \lambda_{1}((\Sigma^{t})^{-1} \delta \Sigma^{t})$  (respectively $\nu_{\Sigma}^{t} = \alpha_{\Sigma} /  \lambda_{1}((\Sigma^{t})^{-1} \delta \Sigma^{t})$ with $\alpha_{\mu}$ (resp. $\alpha_{\Sigma}$) being a learning rate constant for $\mu$ (resp. $C$) such that $0 < \alpha_{\mu} \leq 1$ (resp. $0 < \alpha_{\Sigma} \leq 1/2$).

With these assumptions, we can prove the global convergence of $\mu$ and $C$ with linear convergence speed as stated by the proposition~\ref{prop:convergence} below. The optimum of our quadratic test function is zero. In the following, we denote by $\norm{M}$ the Frobenius norm of $M$, namely $\norm{M} = \Tr^{1/2}(M^\trans M)$.

\begin{proposition}\label{prop:convergence}
For the canonical quadratic optimization, the natural gradient descent is explicit and given by 
\begin{align} 
\mu^{t+1}&= \mu^{t} - \nu_{\mu}^{t} \,  \kappa \, \Sigma^{t} \HMat \mu^{t} \nonumber \\
\text{and } \quad \Sigma^{t+1} &= \Sigma^{t} - \nu_{\Sigma}^{t} \, \kappa \, \Sigma^{t} \HMat \Sigma^{t} \label{equation:natural_gradientd_update_equations}
\end{align}
where 
\begin{equation}\label{equation:kappa}
\kappa =  \frac{2 \pi}{\left( \Gamma\left(\frac d 2 + 1 \right) \sqrt{\det(\HMat)} \right)^{2/d}}
\end{equation}
with $\Gamma(z) = \int_0^{+\infty}  t^{z-1}\,e^{-t}\,\mathrm{d}t$ the regular Gamma function.  Suppose \eqref{equation:learning_rate_cond_mu} and \eqref{equation:learning_rate_cond_sigma} hold, then 
$\norm{\mu^{t}}$ (resp. $\norm{\Sigma^t}$) converges to zero with a rate of convergence defined as the $\limsup$ of the ratio of two consecutive terms upper bounded by $1 - \nummin$ (respectively $1 - \nucmin$).
\end{proposition}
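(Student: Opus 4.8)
The plan is to separate the two assertions: first derive the closed forms in \eqref{equation:natural_gradientd_update_equations}--\eqref{equation:kappa}, then establish the linear rates. The closed-form part is essentially a Gaussian-moment computation. The first step is to evaluate $\Lf$ on the quadratic $f$. Since $\{y: f(y)\leq f(x)\}$ is the ellipsoid $\{y: y^{\trans}\HMat y \leq x^{\trans}\HMat x\}$, the change of variables $z=\HMat^{1/2}y$ (Jacobian $1/\sqrt{\det\HMat}$) turns it into a Euclidean ball of radius $(x^{\trans}\HMat x)^{1/2}$, so that
\[
\Lf(x) = \frac{\pi^{d/2}}{\Gamma(\tfrac d2 + 1)\sqrt{\det\HMat}}\,(x^{\trans}\HMat x)^{d/2},
\]
whence $\Lf^{2/d}(x)=\tfrac{\kappa}{2}\,x^{\trans}\HMat x$ with $\kappa$ exactly as in \eqref{equation:kappa}. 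I would then substitute this into Proposition~\ref{prop:natural_gradient}, writing $X=\mu^{t}+\xi$ with $\xi\sim\mathcal N(0,\Sigma^{t})$. Expanding $X^{\trans}\HMat X$ and using that odd central Gaussian moments vanish gives $\delta\mu^{t}=\kappa\,\E[\xi\xi^{\trans}]\HMat\mu^{t}=\kappa\Sigma^{t}\HMat\mu^{t}$. For $\delta\Sigma^{t}$ the only nontrivial contribution is the fourth moment $\E[(\xi^{\trans}\HMat\xi)\xi\xi^{\trans}]$, which by Isserlis' (Wick's) theorem equals $\Tr(\Sigma^{t}\HMat)\Sigma^{t}+2\Sigma^{t}\HMat\Sigma^{t}$; after cancellation against the $-\Sigma^{t}$ term one is left with $\delta\Sigma^{t}=\kappa\Sigma^{t}\HMat\Sigma^{t}$. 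This is routine once the moment bookkeeping is set up.

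For the convergence, the key device is the congruence transform $\tilde\Sigma^{t}=\HMat^{1/2}\Sigma^{t}\HMat^{1/2}$ and $\tilde\mu^{t}=\HMat^{1/2}\mu^{t}$, which decouples the dynamics: the updates become $\tilde\Sigma^{t+1}=\tilde\Sigma^{t}-\nu_{\Sigma}^{t}\kappa(\tilde\Sigma^{t})^{2}$ and $\tilde\mu^{t+1}=(I_{d}-\nu_{\mu}^{t}\kappa\tilde\Sigma^{t})\tilde\mu^{t}$. Because the covariance update is a matrix polynomial in $\tilde\Sigma^{t}$, all iterates share a fixed eigenbasis, and each eigenvalue obeys the scalar recursion $\lambda_{i}^{t+1}=\lambda_{i}^{t}(1-\nu_{\Sigma}^{t}\kappa\lambda_{i}^{t})$. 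Since $(\Sigma^{t})^{-1}\delta\Sigma^{t}=\kappa\HMat\Sigma^{t}$ is similar to $\kappa\tilde\Sigma^{t}$, the quantity controlled by \eqref{equation:learning_rate_cond_sigma} is exactly $c^{t}:=\nu_{\Sigma}^{t}\kappa\lambda_{\max}^{t}\in[\nucmin,1/2]$; the bound $c^{t}\leq 1/2$ forces every factor $1-\nu_{\Sigma}^{t}\kappa\lambda_{i}^{t}\geq 1/2>0$, so the eigenvalues stay positive and strictly decrease (reproving Proposition~\ref{prop:positivity} in this case).

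The main obstacle is that \eqref{equation:learning_rate_cond_sigma} pins down only the largest eigenvalue, whereas $\norm{\Sigma^{t}}$ mixes all of them and a naive per-step Frobenius bound is governed by the smallest eigenvalue $\lambda_{\min}^{t}$, for which $\nu_{\Sigma}^{t}\kappa\lambda_{\min}^{t}$ may sit well below $\nucmin$. The resolution I would use is an eigenvalue-equalization lemma. The condition number $\rho^{t}=\lambda_{\max}^{t}/\lambda_{\min}^{t}$ is non-increasing, since for $\lambda_{i}^{t}>\lambda_{j}^{t}$ one has $\lambda_{i}^{t+1}/\lambda_{j}^{t+1}=(\lambda_{i}^{t}/\lambda_{j}^{t})(1-\nu_{\Sigma}^{t}\kappa\lambda_{i}^{t})/(1-\nu_{\Sigma}^{t}\kappa\lambda_{j}^{t})<\lambda_{i}^{t}/\lambda_{j}^{t}$; being bounded below by $1$ it converges, and $\rho^{t+1}/\rho^{t}\to 1$ forces $c^{t}(1-1/\rho^{t})\to 0$, so $\rho^{t}\to 1$ because $c^{t}\geq\nucmin>0$. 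Hence asymptotically all eigenvalues contract by the common factor $1-c^{t}$, giving $\limsup_{t}\norm{\Sigma^{t+1}}/\norm{\Sigma^{t}}=\limsup_{t}(1-c^{t})\leq 1-\nucmin$ and $\norm{\Sigma^{t}}\to 0$ geometrically.

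The same equalization handles the mean. From $\tilde\mu^{t+1}=(I_{d}-\nu_{\mu}^{t}\kappa\tilde\Sigma^{t})\tilde\mu^{t}$ and the fact that \eqref{equation:learning_rate_cond_mu} yields $\nu_{\mu}^{t}\kappa\lambda_{i}^{t}\in[0,1]$, the operator norm of $I_{d}-\nu_{\mu}^{t}\kappa\tilde\Sigma^{t}$ is $1-\nu_{\mu}^{t}\kappa\lambda_{\min}^{t}=1-(\nu_{\mu}^{t}\kappa\lambda_{\max}^{t})/\rho^{t}$; since $\nu_{\mu}^{t}\kappa\lambda_{\max}^{t}\geq\nummin$ and $\rho^{t}\to 1$, taking $\limsup$ gives the bound $1-\nummin$ for $\norm{\tilde\mu^{t}}$. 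To transfer this to $\norm{\mu^{t}}=\norm{\HMat^{-1/2}\tilde\mu^{t}}$ I would write $\tilde\Sigma^{t}=\lambda_{\max}^{t}(I_{d}+E^{t})$ with $\norm{E^{t}}\to 0$ (a restatement of $\rho^{t}\to 1$): the update is then a scalar contraction by $1-\nu_{\mu}^{t}\kappa\lambda_{\max}^{t}$ plus a perturbation of size $O(\norm{E^{t}})\norm{\mu^{t}}$, and the scalar part commutes with the fixed map $\HMat^{-1/2}$, so the perturbation vanishes in the $\limsup$ and the rate $1-\nummin$ carries over to $\norm{\mu^{t}}$. I expect the equalization lemma ($\rho^{t}\to 1$) to be the genuinely delicate step; everything else is a reduction to scalar recursions.
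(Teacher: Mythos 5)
Your proof is correct, and its convergence skeleton coincides with the paper's: symmetrize via $\tilde\Sigma^{t} = \HMat^{1/2}\Sigma^{t}\HMat^{1/2}$, observe that all iterates share a fixed eigenbasis so the covariance dynamics reduces to the scalar recursions $\lambda_{i}^{t+1}=\lambda_{i}^{t}(1-\nu_{\Sigma}^{t}\kappa\lambda_{i}^{t})$, show the condition number tends to $1$, and convert that equalization into the limsup bounds. Two steps genuinely differ from the paper. First, for the closed form the paper never touches fourth moments: it computes the utility explicitly, $\Uf(\theta)=\bigl(V_{d}(1)/\sqrt{\det \HMat}\bigr)^{2/d}\bigl(\mu^{\trans}\HMat\mu+\Tr(\HMat\Sigma)\bigr)$, differentiates in $(\mu,\Sigma)$, and multiplies by the inverse Fisher blocks $\Sigma$ and $2(\Sigma\otimes\Sigma)$; you instead substitute $\Lf^{2/d}(x)=\tfrac{\kappa}{2}\,x^{\trans}\HMat x$ into the expectation formulas of Proposition~\ref{prop:natural_gradient} and evaluate by Isserlis --- your identity $\E[(\xi^{\trans}\HMat\xi)\xi\xi^{\trans}]=\Tr(\Sigma\HMat)\Sigma+2\Sigma\HMat\Sigma$ is right and yields the same $\delta\mu^{t}$ and $\delta\Sigma^{t}$; the paper's route buys a shorter computation (only second moments of $X^{\trans}\HMat X$ are needed), while yours is more self-contained given Proposition~\ref{prop:natural_gradient}. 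Second, the equalization step: the paper's lemma on the diagonal recursion is quantitative, establishing $\bigl(\Cond(\Lambda^{t+1})-1\bigr)/\bigl(\Cond(\Lambda^{t})-1\bigr)\leq 1-\nucmin$ and hence the geometric bound $\Cond(\Sigma^{t}\HMat)\leq 1+(1-\nucmin)^{t}\bigl(\Cond(\Sigma^{0}\HMat)-1\bigr)$; your argument is soft --- $\rho^{t}$ is monotone and bounded below so it converges, and $1-\rho^{t+1}/\rho^{t}=c^{t}(1-1/\rho^{t})/(1-c^{t}/\rho^{t})\to 0$ together with $c^{t}\geq\nucmin>0$ forces $\rho^{t}\to 1$ --- which suffices for the stated limsup conclusions but forfeits the explicit rate of equalization that the paper obtains as a by-product. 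Your perturbation bookkeeping $\tilde\Sigma^{t}=\lambda_{\max}^{t}(I+E^{t})$ for transferring the contraction through the fixed maps $\HMat^{\pm 1/2}$ is sound, and is in fact more careful than the paper's final step, which informally passes to the limit in $\sigma_{1}(I-\kappa\nu_{\mu}^{t}\Sigma^{t}\HMat)$ as though $\nu_{\mu}^{t}\kappa\lambda_{1}(\Sigma^{t}\HMat)$ converged rather than merely sitting in $[\nummin,1]$. The only blemish on your side is the asserted equality $\limsup_{t}\norm{\Sigma^{t+1}}/\norm{\Sigma^{t}}=\limsup_{t}(1-c^{t})$, which your perturbation argument only justifies as the inequality $\leq$ --- but that is the direction the proposition requires.
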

\begin{proof}
Refer to \ref{proof3}  in the supplementary material. The intuition of the result of the convergence appeared as early as in  \cite{Auger_2004} and was progressively improved and more detailled in  \cite{Arnold_2010},  \cite{Akimoto_2012b},  \cite{Beyer:2014} and lately  \cite{Hansen_2016} among others. The novelty of our proof is the explicit computation of the natural gradient descent update equations with the true computation of the constant $\kappa$ in the update equations \eqref{equation:natural_gradientd_update_equations}.
\end{proof}

\begin{remark}
Proposition~\ref{prop:convergence} is remarkable because it indicates that the rate of convergence is linear, which is unexpectedly fast for this kind of algorithms and explains why the CMA-ES is currently the state of the art solution for several complex optimization settings. 
\end{remark}

\section{Numerical Results} \label{NumericalResults}
In order to test the efficiency of our optimization procedure that results in a CMA-ES optimization for Learning parameters of our graphical model and the price target and stop loss, we look at a trend following algorithm where we enter a long (respectively short) trade if the dynamic Bayesian network forecast is above (respectively below) the previous day closing price. For each comparison, we add an offset $\mu$ to avoid triggering false alarm signals. We set for each trade a pre-determined profit and stop loss target in ticks. The chosen architecture for our graphical model is a combination of model of Figure~\ref{DBN2} and ~\ref{DBN3} and given in Figure~\ref{Our model}.

\begin{figure}[ht]
\centering
\resizebox {.47\textwidth} {!} {
\begin{tikzpicture}
 \node[box,draw=white!100] (Latent) {\textbf{Latent}};
 \node[main] (L1) [right=of Latent] {$\mathbf{x}_{11}$};
 \node[main] (L2) [right=of L1] {$\mathbf{x}_{21}$};
 \node[box,draw=white!100] (L4) [right=1.5cm of L2] {};
 \node[main] (Lt) [right=of L4] {$\mathbf{x}_{t1}$};
 \node[main] (L21) [below=0.5cm of L1 ] {$\mathbf{x}_{12}$};
 \node[main] (L22) [below=0.5cm of L2] {$\mathbf{x}_{22}$};
 \node[box,draw=white!100] (L24) [below=1.1cm of L4] {};
 \node[main] (L2t) [below=0.5cm of Lt] {$\mathbf{x}_{t2}$};
 \node[main,fill=black!35] (O1) [below=of L21] {$\mathbf{z}_1$};
 \node[main,fill=black!35] (O2) [below=of L22] {$\mathbf{z}_2$};
 \node[box,draw=white!100] (O4) [below=1.3cm of L24] {};
 \node[main,fill=black!35] (Ot) [below=of L2t] {$\mathbf{z}_t$};
 \node[box,draw=white!100,below=90pt] (Observed) {\textbf{Observed}};

 \path (L1) edge [connect] (L2)
        (L2) -- node{\ldots} (L4)
        (L4) edge [connect] (Lt);
 \path (L21) edge [connect] (L22)
        (L22) -- node{\ldots} (L24)
        (L24) edge [connect] (L2t);
 \path (L21) edge [connect] (O1)
	(L22) edge [connect] (O2)
	(L2t) edge [connect] (Ot);
 \path (L1) edge [bend right=22,connect] (O1)
	(L2) edge [bend right=22, connect] (O2)
	(Lt) edge [bend right=22, connect] (Ot);
 \path (L1) edge [connect] (L22)
        (L4) edge [connect] (L2t);
 \path (O1) edge [connect] (L2)
	 (O1) edge [connect] (L22)
        (O4) edge [connect] (Lt)
        (O4) edge [connect] (L2t);
 \draw [dashed, shorten >=-0.5cm, shorten <=-2.5cm]
     ($(O1)+(0,0.855)$) -- ($(Ot)+(0,0.8)$);
\end{tikzpicture}}
\caption{Combination of a multi state Kalman filter and echo neural networks (ESN) with feedback effet of observable variable on next latent variables}\label{Our model}
\end{figure}
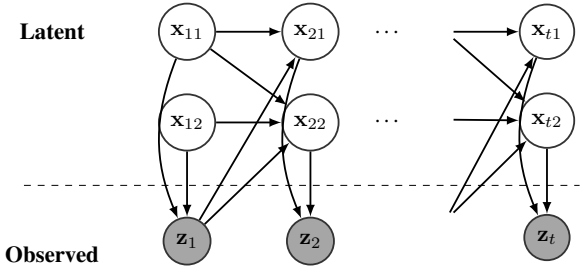

The corresponding dynamic is composed of two latent variables representing short and long term effects, denote respectively: $x_{.1}$ and $x_{.2}$. We assume that long term effect impacts the next long term latent variable while there is no influence of short term latent variable on the next long term effect latent variable. This results in the following equations:
\begin{align}
   \mathbf{x}_{t+1} & = \mathbf{\Phi}    \mathbf{x}_t + \mathbf{c}_t +  \mathbf{w}_t \label{model_eq1} \\
   \mathbf{z}_t        & = \mathbf{H} \mathbf{x}_t +  \mathbf{v}_t \label{model_eq2} 
\end{align}   

We assume that the observation noise $\mathbf{v}_t$ follows a multivariate normal distribution with zero mean and covariance matrix given by $\mathbf{R}_t$: $\mathbf{v}_t \sim \mathcal{N}\left(0, \mathbf{R}_t\right)$. In addition, the initial state, and noise vectors at each step ${\mathbf{x}_0, \mathbf{w}_1, \ldots, \mathbf{w}_t, \mathbf{v}_1, \ldots, \mathbf{v}_t}$ are assumed to be all mutually independent. We also denote by $\mathbf{P}_t=\operatorname{Cov}(\mathbf{x}_{t})$ the covariance matrix of $\mathbf{x}_{t}$. We assume the following parameters:
\vspace{-0.25cm}
\begin{align}
& \mathbf{\Phi}(x) \! =\!  \left[ {\begin{array}{cc} \!\!  p_1 & p_2 \! \!  \\ \!\!   0 &  p_3 \!\!   \end{array} } \right]\! \!  , 
\mathbf{H} \! =\!  \left[ {\begin{array}{c} \!\! p_4 \!\!  \\  \!\!  p_5 \! \! \end{array} } \right]\! \! ,
\mathbf{Q}_{t=0} \! = \!  \left[ {\begin{array}{l}  \!  p_6^2  \; p_6p_7 \!  \\ \!  p_7 p_6 \;   p_8^2  \!  \end{array} } \right],  \nonumber  \\
& \mathbf{R}_{t=0} \! = \! \left[ {\begin{array}{c} \!\!\!\!  p_9 \!\!\!\!    \end{array} } \right]\! \!, \! 
\mathbf{P}_{t=0}  \! =\!   \left[ {\begin{array}{l} \!\!\!\!  p_{10}  \; 0 \!\!\!\!  \\ \!\!\!\! 0 \;   p_{11}  \!\!\!\!\end{array} } \right]\! \! , \mathbf{c}_t \! = \!  \left[ {\begin{array}{c} \!\!\!\! p_{12} (.5\! +\! p_{13} \! -\!  k_t  )\!\!\!\! \\ \!\!\!\! p_{14}  (.5\! +\!  p_{15} \! -\!  k_t  )\!\!\!\!   \end{array} }  \right] \qquad \label{model_eq3} 
\end{align}
The variable $k_t$ is computed as the ratio of the difference between the last price observation and the running minimum over the difference between the running maximum and minimum observed over 20 periods. The pseudo code is given in Algorithm~\ref{algo:pseudo_code}.
\begin{algorithm}
\caption{Graphical model Trend following algorithm}\label{algo:pseudo_code}
	\begin{algorithmic} 
	\STATE \textbf{Initialize common trade details}
	\STATE SetProfitTarget( target)							
	\STATE SetStopLoss( stop\_loss )							
	\\
	\WHILE{ Not In Position}											
		\IF{ DBN( $p_1, \ldots, p_n$).Predict[0] $\ge$ Close[0] + $\mu$} 	 
			\STATE EnterLong()										
		\ELSIF{ DBN( $p_1, \ldots, p_n$).Predict[0] $\le$ Close[0] + $\mu$} 	
			\STATE EnterShort()										
		\ENDIF
	\ENDWHILE
	\end{algorithmic}
\end{algorithm}

Our resulting algorithm depends on the following parameters $p_1, \ldots, p_n$, the parameters of our dynamic graphical model, the profit target, the stop loss and the signal threshold $\mu$. We could estimate the graphical model parameters with the EM procedure, then optimize the profit target, stop loss and signal threshold separately. However, if by bad luck, the dynamics of the graphical model is incorrect, the noise induced by wrong specification will be factored in these three parameters.

We opt rather for a combined optimization of all the parameters. We use one minute data for the S\&P 500 index futures from 01Jan2017 to 01Jan2018 but take daily decisions. We train our model on the first 6 months representing about 170,000 data points and test it on the next six months. For a model given by equations \eqref{model_eq1} and \eqref{model_eq2}  and parameters specified in \eqref{model_eq3}, the optimization encompasses 18 parameters: $p_1, \ldots, p_{15}$, the profit target, the stop loss and the signal offset $\mu$, making it non trivial. We use the CMA-ES algorithm to find the optimal solution. In our optimization, we add a $L1$ penalty to enforce sparsity of our parameters. Obtained parameters are summarized in Table~\ref{tab:param}. Out of 18 parameters, only 10 are not null. Table~\ref{tab:model_stats} shows that statistics for train and test are very similar, which is a good sign for little overfitting. Figures \ref{fig:train} and \ref{fig:test} are very similar confirming our intuition of reduced overfitting.

\begin{figure}[!ht]
\centering
\includegraphics[width=6.5cm]{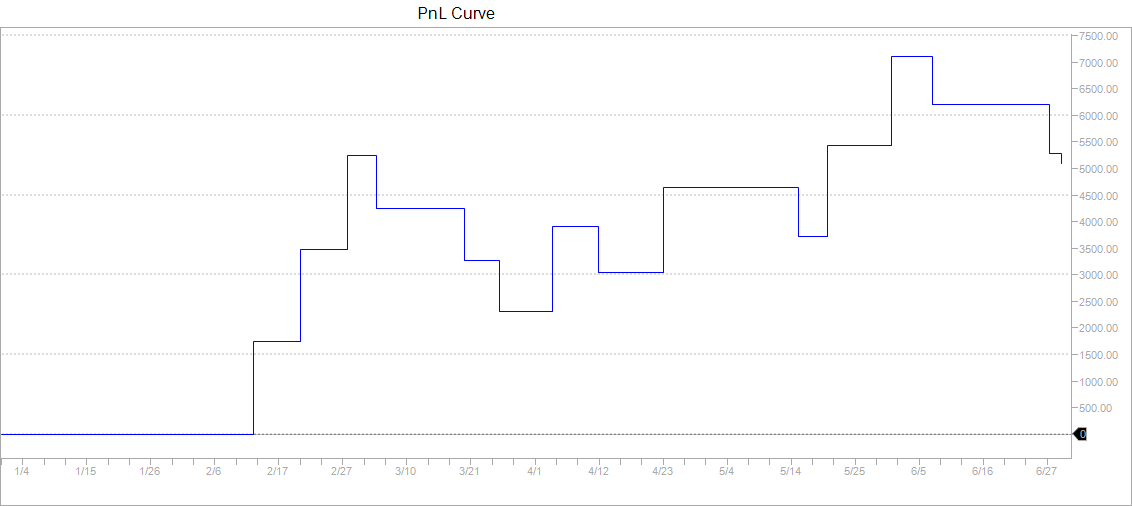}
\caption{Graphical model algorithm on train data set}
\label{fig:train}
\end{figure} 
\begin{figure}[!ht]
\centering
\includegraphics[width=6.5cm]{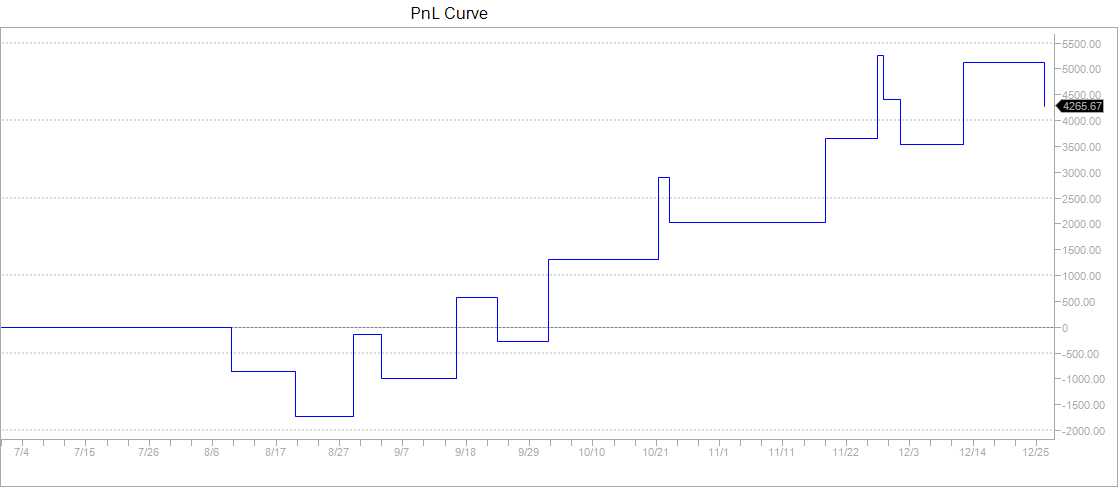}
\caption{Graphical model algorithm on test data set}
\label{fig:test}
\end{figure}

\begin{table}[!ht]
  \centering
  \caption{Optimal non null parameters}
  \resizebox{0.47 \textwidth}{!}{
    \begin{tabular}{|c|r|r|r|r|r|r|r|r|r|r|r|r|r|r|r|r|r|r|}
    \toprule
   Parameters 
& \multicolumn{1}{l|}{$p_{01}$} 
& \multicolumn{1}{l|}{$p_{03}$}
& \multicolumn{1}{l|}{$p_{04}$} 
& \multicolumn{1}{l|}{$p_{05}$} 
& \multicolumn{1}{l|}{$p_{06}$} 
& \multicolumn{1}{l|}{$p_{07}$} 
& \multicolumn{1}{l|}{$p_{12}$} 
& \multicolumn{1}{l|}{threshold} 
& \multicolumn{1}{l|}{stop} 
& \multicolumn{1}{l|}{target} \\
    \midrule
Value 
& 24.8  
& 11.8  
& 46.2 
 & 77.5 
 & 67    
& 99.9   
& 99.9   
& 5     
& 80    
& 150 \\
    \bottomrule
    \end{tabular}}%
  \label{tab:param}%
\end{table} 
We compare our algorithm with a classical EM  and a traditional Moving Average (MA) crossover approach to test the efficiency of our graphical model for trend detection. The MA algorithm generates a buy (resp. sell) signal when the fast moving average crosses above (resp. below) the long moving average.  Table~\ref{tab:mavskf} details the results of this comparison. We see that on the train dataset, the three algorithms share similar performances (similar first two columns) but they perform quite differently on the test set and only our graphical model manages to have a Sharpe ratio over 1 on the test period. Overall, the MA algorithm seems to suffer from overfitting as it performance in the test set is very different from the one on the train. This is confirmed by Figure \ref{fig:ma_train} and \ref{fig:ma_test}, with sharp contrast between train and test. 

\begin{figure}[!ht]
\centering
\includegraphics[width=7cm]{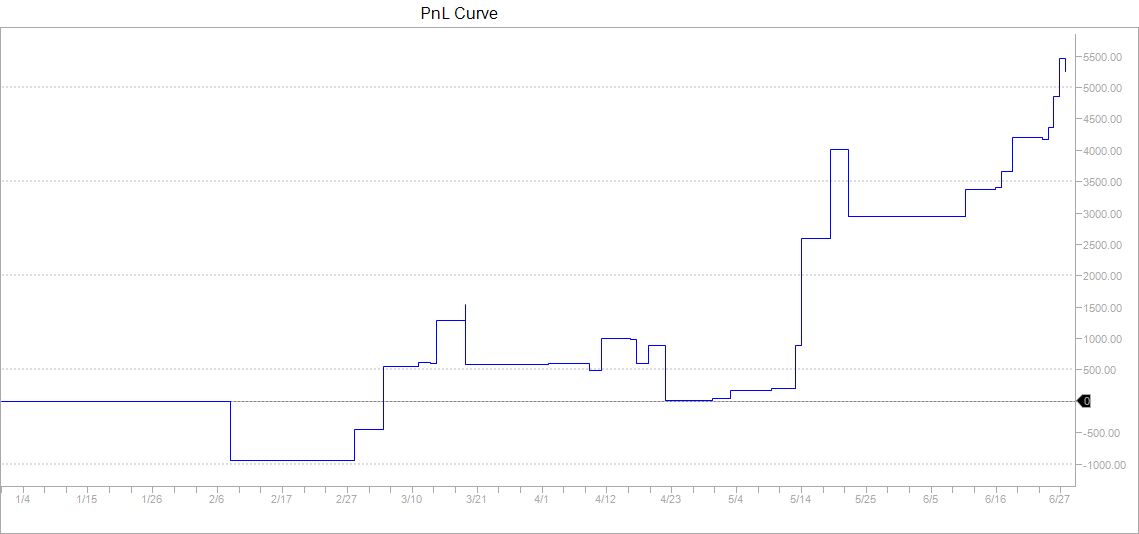}
\caption{Moving Average Crossover algorithm on train data set}
\label{fig:ma_train}
\end{figure}

\begin{figure}[!ht]
\centering
\includegraphics[width=7cm]{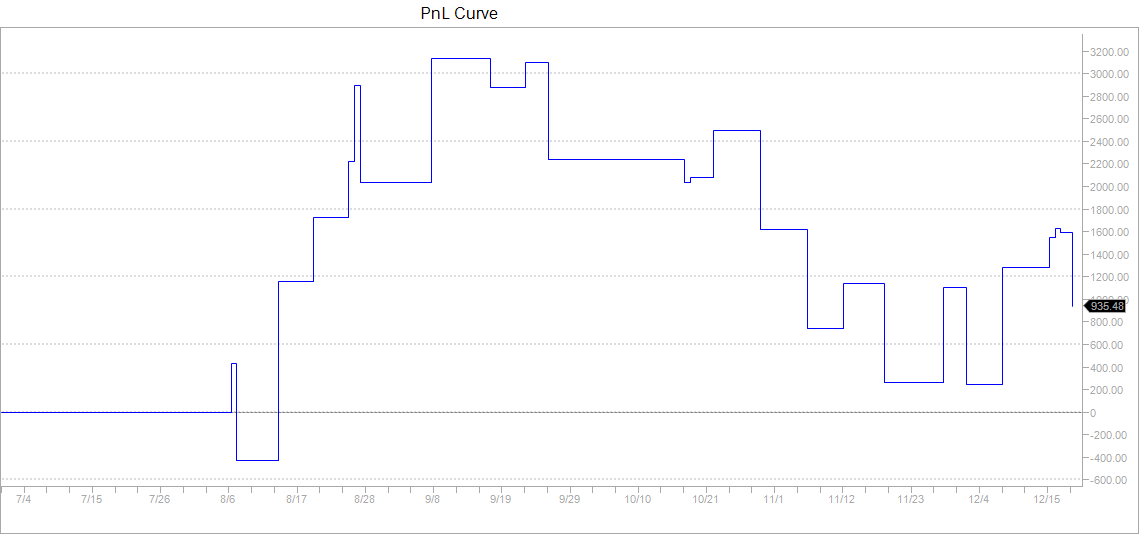}
\caption{Moving Average Crossover algorithm on test data set}
\label{fig:ma_test}
\end{figure}

\begin{table}[!ht]
  \centering
     \caption{Train/Test statistics for NGD graphical model}
    \resizebox{0.5 \textwidth}{!}{
    \begin{tabular}{|c|c|c|c|c|}
    \toprule
    \rowcolor{LightGray} Performance & Net Profit  & Avg. Trade & Tot. Net Profit (\%) & Ann. Net Profit (\%) \\
    \midrule
    Train &\textcolor[rgb]{0,.502, 0}{5,086 \euro}  & \textcolor[rgb]{0,.502, 0}{339.05 \euro} & \textcolor[rgb]{0,.502, 0}{5.09\%} 
		& \textcolor[rgb]{0,.502, 0}{10.59\%} \\
    \midrule
    Test  & \textcolor[rgb]{0,.502, 0}{4,266 \euro} & \textcolor[rgb]{0,.502, 0}{284.38 \euro} & \textcolor[rgb]{0,.502, 0}{4.27\%} 
		& \textcolor[rgb]{0,.502, 0}{8.69\%} \\
    \midrule
    \rowcolor{LightGray} Performance & Vol   &Sharpe Ratio & Max. Drawdown & Recovery Factor \\
    \midrule
    Train & \textcolor[rgb]{0,.502, 0}{6.54\%} & \textcolor[rgb]{0,.502, 0}{1.62} & {-2,941 \euro} & \textcolor[rgb]{0,.502, 0}{3.510} \\
    \midrule
    Test  & \textcolor[rgb]{0,.502, 0}{6.20\%} & \textcolor[rgb]{0,.502, 0}{1.40} & {-1,721 \euro} & \textcolor[rgb]{0,.502, 0}{4.948}\\
    \midrule
    \rowcolor{LightGray} Performance &  Profit Factor &  \# of Trades & \# of Contracts Avg. Winning Trade & Max. conseq. Winners \\
    \midrule
    Train &  1.75 \euro & 15  & \textcolor[rgb]{0,.502, 0}{1,692.09 \euro} & 3 \\
    \midrule
    Test  & 1.62 \euro & 15  & \textcolor[rgb]{0,.502, 0}{1,588.92 \euro} & 2 \\
    \bottomrule
    \end{tabular}}
  \label{tab:model_stats}
\end{table}

\begin{table}[!ht]
  \centering
  \caption{Graphical Model versus EM and MA crossover}
\resizebox{0.5 \textwidth}{!}{
    \begin{tabular}{|c|c|c|c|c|}
    \toprule
    \rowcolor{LightGray}  Algo  &  Train: Percent Profitable & Train: Total Net Profit & Test: Total Net Profit & Test: Recovery Factor \\
    \midrule
    MA Cross over 		& \textcolor[rgb]{0,.502,0}{54 \%}  	& \textcolor[rgb]{0,.502,0}{5,260.00  \euro}	& {935 \euro} & 0.32  \\
    \midrule
    EM optimized model & \textcolor[rgb]{0,.502,0}{42 \%} & \textcolor[rgb]{0,.502, 0}{5,165.72 \euro}		& {2,250 \euro} & 0.56 \\
    \midrule
    NGD Graphical model& \textcolor[rgb]{0,.502,0}{48 \% } & \textcolor[rgb]{0,.502, 0}{5,085.79  \euro} 	& \textcolor[rgb]{0,.502, 0}{4,266 \euro} & \textcolor[rgb]{0,.502, 0}{2.48} \\
    \midrule
    \rowcolor{LightGray} Algo  & Test: Sharpe Ratio & Total \# of Trades & Overall Profit Factor & Test: Max. Drawdown  \\
    \midrule
    MA Cross over          & \textcolor[rgb]{1,0,0}{0.41}		& 26  & 1.13  & \textcolor[rgb]{1,0,0}{-\euro 2,889} \\
    \midrule
    EM optimized model   & \textcolor[rgb]{1,0,0}{0.85}		& 28 & 1.25	& \textcolor[rgb]{1,0,0}{-\euro 2,523 } \\
    \midrule
    NGD Graphical model &  \textcolor[rgb]{0,.502,0}{1.40  }	& 30  & 1.62  & {-\euro 1,721} \\
    \bottomrule
    \end{tabular}}%
  \label{tab:mavskf}%
\end{table}

\section{Conclusion}
In this paper, we presented a novel approach for learn-ing parameters in dynamic graphical models that tackles the targeted usage of our dynamic graphical model and incorporates them in the natural gradient optimization of the loss function. Compared to EM method, this method performs better in real scenarios and is less prone to overfitting.

{\scriptsize
	\bibliographystyle{apalike}
	\bibliography{mybib}
}

\appendix
\clearpage

\line(1,0){225}\\\\
{\Huge
{\textbf{Supplementary}}\\
{\textbf{Materials}}
}
\line(1,0){225}

\section{Proof of proposition \ref{prop:natural_gradient}}\label{proof0}
\begin{proof}
The regularity condition that the expectation of our loss function is square integrable ($\EX{\Lf^{2}(X)} < \infty$) implies that for any $d \geq 1$, we also have
$\EX{\Lf^{2/d}(X)} < \infty$. 

Hence we can interchange the order of integration and differentiation and compute the natural gradient update as 
\begin{equation}\label{equation:nabla_u_score}
\nabla \Uf(\theta) = \EX{\Lf^{2/d}(X) S(\theta, x)},
\end{equation}

where $S(\theta, x) = \nabla \ln l(\theta, x)$ is the score function, defined as the gradient of the log-likelihood $L(\theta, x)  = \ln l (\theta, x) = \ln p_{\theta}(x)$ of the multivariate Gaussian distribution with respect to its moment parameters. 

For a multi variate normal distribution, the score function is given by
\begin{equation*}
 S(\theta, x) \!  = \!\! \begin{bmatrix}
\Sigma^{-1}(x - \mu)\\
-\frac{1}{2}\vect (\Sigma^{-1} - \Sigma^{-1}(x - \mu)(x - \mu)^\mathrm{T}\Sigma^{-1} )
\end{bmatrix}  
\end{equation*}

For a multi variate normal distribution, the Fisher matrix information has a closed form given by
\begin{equation}
\FIM = \begin{bmatrix}
\Sigma^{-1} & 0 \\ 0 & \frac{1}{2} ( \Sigma^{-1} \otimes \Sigma^{-1})
\end{bmatrix},\label{equation:fisher}
\end{equation}
where $\otimes$ denotes the Kronecker product operator. The natural gradient writes 
\begin{equation}\label{equation:update_proof1}
\theta^{t+1} = \theta^{t} - \nu_{t} \FM_{\theta^{t}}^{-1} \nabla \Uf(\theta^{t})
\end{equation}

Combining \eqref{equation:nabla_u_score}, \eqref{equation:fisher} and \eqref{equation:update_proof1}, and splitting the learning rate $\nu_{t}$ among the mean vector and the covariance matrix, with two different learning rates, we obtained 
\begin{equation}\label{equation:natural_gradient_proof}
\begin{split}
\mu^{t+1} = \mu^{t} - \nu_{\mu}^{t} \delta m^{t}, \\
\Sigma^{t+1} = \Sigma^{t} - \nu_{\Sigma}^{t} \delta \Sigma^{t}\enspace.
\end{split}
\end{equation}

with
\begin{equation*}
\begin{split}
\delta \mu^{t} &= \E_{X \sim P_{\theta^{t}}}[\Lf^{2/d}(X) (X - \mu^{t})] \\
\delta \Sigma^{t} &= \E_{X \sim P_{\theta^{t}}}\bigl[ \Lf^{2/d}(X) \bigl((X - \mu^{t})(X - \mu^{t})^\trans - \Sigma^{t} \bigr) \bigr]
\end{split}
\end{equation*}
which concludes the proof.
\end{proof}

\section{Proof of proposition \ref{prop:positivity}}\label{proof1}
\begin{proof}
The symmetry is obvious. Let us prove that the matrix $\Sigma_t$ is positive definite for each $t$ by mathematical induction. 
The result is trivially true for $t=0$ as $\Sigma_0$ is  positive definite. Suppose that at step $t$ the result is true and that the matrix $C^t$ is positive definite and symmetric. 
Notice that the update \ref{equation:natural_gradient} writes 
\begin{equation}\label{equation:proof1_eqCF}
\Sigma^{t+1} = \sqrt{\Sigma^{t}}  (I_{d}-\nu_{\Sigma}^{t} \sqrt{\Sigma^{t}}^{-1} \delta \Sigma^{t}   \sqrt{\Sigma^{t}}^{-1}) \sqrt{\Sigma^{t}}
\end{equation}
for the closed form (C) NGD and 
\begin{equation}\label{equation:proof1_eqMC}
\Sigma^{t+1} = \sqrt{\Sigma^{t}} ((I_{d} - \nu_{\Sigma}  \sqrt{\Sigma^{t}}^{-1}  \widehat{\delta \Sigma^{t}} \sqrt{\Sigma^{t}}^{-1} ) \sqrt{\Sigma^{t}}
\end{equation}
for the Monte Carlo case.  
Using the result that if $A$ and $B$ are symmetric positive definite, then $\sqrt{A} B \sqrt{A}$ is also positive definite, we get that $\Sigma^{t+1}$ stays positive definite. 
This concludes the proof that the matrix $\Sigma_t$ is positive definite for each $t$.
Should the learning rate be too large so that the inner term in \ref{equation:proof1_eqCF} $I_{d}-\nu_{\Sigma}^{t} \sqrt{\Sigma^{t}}^{-1} \delta \Sigma^{t}   \sqrt{\Sigma^{t}}^{-1}$ for the closed form case, and $I_{d} - \nu_{\Sigma}  \sqrt{\Sigma^{t}}^{-1}  \widehat{\delta \Sigma^{t}} \sqrt{\Sigma^{t}}^{-1} $ for the Monte Carlo case are not definite positive, the next iteration $\Sigma^{t+1}$ would also be not definitive positive. Should the previous matrices be non positive, the next iteration $\Sigma^{t+1}$ would non positive, which concludes the proof.
\end{proof}

\section{Proof of proposition \ref{prop:consistency}}\label{proof2}
We will decompose this proof into various elementary lemmas. Usually, when proving convergence for Monte Carlo, one relies on traditional statistical tools like the strong law of large numbers as well as some measure tools like Lebesgue's dominated convergence, Cauchy Schwartz inequality, etc... However, we take an original approach and rather use connection between single term convergence and Cesaro convergence. A first lemma that is intuitive is that almost sure convergence implies almost sure (a.s.) Cesaro convergence. To make things more precise, let $X$ be a random variable of $\R^{d}$ and $f_n: \R^{d} \rightarrow \R$ a series of real valued functions.
\begin{lemma}\label{proof2:Cesaro convergence}
If $\left( f_n \right)_{n \in \mathbb{N}}$ converges almost surely to 0, then its Cesaro sum obtained for $n$ independent realizations of $X$ converges almost surely to 0. In other words,
$$
f_n(X) \xrightarrow{a.s.}0\,\,\implies \,\,\frac{1}{n} \sum_{i=1}^{n}f_n(X_i)  \xrightarrow{a.s.}0
$$
where $X_i \sim X$ are i.i.d. variables.
\end{lemma}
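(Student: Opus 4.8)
The plan is to prove the statement by a pathwise reduction to the classical Cesàro (Toeplitz) averaging theorem: if a real sequence $(a_n)$ tends to $L$, then so do its arithmetic means $\frac1n\sum_{i=1}^n a_i$. First I would fix a single almost-sure event on which everything becomes deterministic. For each fixed index $i$ the copy $X_i$ has the same law as $X$, so the hypothesis $f_n(X)\asto 0$ transfers to $f_n(X_i)\asto 0$ as $n\to\infty$; denote the corresponding full-measure event $\Omega_i$. Because $\Omega^\star:=\bigcap_{i\in\mathbb N}\Omega_i$ is a countable intersection of probability-one events, it again has probability one, and on $\Omega^\star$ we have, simultaneously for every $i$, the deterministic convergence $f_n(X_i(\omega))\to 0$ as $n\to\infty$. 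The remainder of the argument can then be run $\omega$ by $\omega$ on $\Omega^\star$.

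The second step is to average. The structural subtlety, which I expect to be the heart of the difficulty, is that the running index $n$ sits in the function subscript as well as in the number of summands, so $\frac1n\sum_{i=1}^n f_n(X_i)$ is the mean of a triangular array $a_{n,i}:=f_n(X_i(\omega))$ rather than the Cesàro mean of one fixed sequence. The pointwise facts $a_{n,i}\to 0$ (for each fixed $i$) do not by themselves control the terms with $i$ close to $n$, which is exactly why pure a.s. convergence is delicate here. I would therefore split the sum at a cutoff $k_n\to\infty$ chosen so that $k_n/n\to 0$: the head $\frac1n\sum_{i\le k_n}a_{n,i}$ is bounded by $\frac{k_n}{n}\max_{i\le k_n}\abs{a_{n,i}}$, which tends to $0$ on $\Omega^\star$, while the tail $\frac1n\sum_{k_n<i\le n}a_{n,i}$ collects the diagonal contributions.

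Controlling this tail uniformly is the main obstacle, because pointwise convergence alone is insufficient: one can concentrate the mass of $f_n$ on a vanishing set and keep the average bounded away from zero, so some uniform control beyond $f_n(X)\asto 0$ must enter. The plan is to dominate the tail using the integrability that the surrounding setting provides, namely $\E[\Lf^2(X)]<\infty$ in the application, which renders the relevant evaluations uniformly integrable and forces the average of the tail terms to be $o(1)$. Combining the negligible head and the $o(1)$ tail gives $\frac1n\sum_{i=1}^n f_n(X_i)\to 0$ on $\Omega^\star$, i.e. almost surely. I would expect the tail estimate to be the only nontrivial step; the Cesàro reduction and the countable-intersection argument are routine once the triangular array is set up.
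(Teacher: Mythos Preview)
Your route differs from the paper's. The paper does not split the sum in the index $i$; instead it dominates $|f_n|$ uniformly in $n$ by the tail supremum $g_m(x):=\sup_{j\ge m}|f_j(x)|$ and bounds
\[
\Bigl|\frac{1}{n}\sum_{i=1}^n f_n(X_i)\Bigr|\;\le\;\frac{1}{n}\sum_{i=1}^n g_m(X_i)\qquad(n\ge m).
\]
For fixed $m$ the right-hand side is an i.i.d.\ average of a \emph{single} function, so the strong law sends it to $\E[g_m(X)]$; dominated convergence then gives $\E[g_m(X)]\to 0$ as $m\to\infty$. The two-limit structure (first $n\to\infty$, then $m\to\infty$) dissolves the triangular-array difficulty you are wrestling with and reduces everything to one SLLN call. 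You correctly diagnose that some domination beyond pure a.s.\ convergence is needed; the paper smuggles the same hypothesis in through its unjustified appeal to dominated convergence, so your instinct there is sound.

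Your head/tail split in $i$, however, is not the right decomposition and has a concrete gap. Every summand carries the same $f_n$, so there is no asymmetry between small and large $i$ to exploit. Your claim that $\frac{k_n}{n}\max_{i\le k_n}|f_n(X_i)|\to 0$ on $\Omega^\star$ is not automatic: pointwise convergence for each fixed $i$ only gives $\sup_n|f_n(X_i(\omega))|=M_i(\omega)<\infty$, and for a deterministic sequence $k_n\to\infty$ the quantity $\max_{i\le k_n}M_i(\omega)$ can grow faster than $n/k_n$. Fixing this forces an $\omega$-dependent cutoff, which then contaminates the tail step where you want to use the i.i.d.\ structure. The cleaner path is the paper's: replace $f_n$ by $g_m$, apply the SLLN once, and let $m\to\infty$.
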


\begin{proof}
For $m \leq n$, we have
\begin{equation}
\begin{split}
\lim_{n\to\infty} \Bigl\lvert \frac{1}{n}\sum_{i=1}^{n} f_{n}(X_i) \Bigr\lvert \leq \lim_{n\to\infty} \frac{1}{n}\sum_{i=1}^{n} \sup_{j \geq n} \abs{f_{j}(X_i)} \\
\leq \lim_{n\to\infty} \frac{1}{n}\sum_{i=1}^{n} \sup_{j \geq m} \abs{f_{j}(X_{i})}.
\end{split}\label{equation:suph}
\end{equation}

By assumption, $f_{n}(x) \asto 0$ almost everywhere in $x$ as $n$ tends to infinity. This implies that $\sup_{j \geq m} \abs{f_{j}(x)} \asto 0$ almost everywhere in $x$ as $\mu$ tends to infinity. 
We can apply the Lebesgue's dominated convergence theorem to prove that $\E[\sup_{j \geq m} \abs{f_{j}(X)}] \to 0$ as $m \to \infty$. 
This implies that $\E[\sup_{j \geq m} \abs{f_{j}(X)}]$ is finite for $\mu$ sufficiently large. The strong law of large numbers (\lln) applies and proves that the right hand side of \eqref{equation:suph} converges to $\E[\sup_{j \geq m} \abs{f_{j}(X)}]$ when $n$ becomes large. The last expectation converges to $0$ when $\mu$ grows to infinity. This concludes the proof.
\end{proof}

\begin{remark}
That convergence implies Cesaro convergence relies strongly on the almost sure convergence nature. Should the initial convergence only be in probability, the convergence of our series of functions $\left( f_n \right)_{n \in \mathbb{N}}$ would not imply Cesaro convergence. Take for instance a standard Gaussian variable $X \sim \mathcal{N}(0,1)$ and the series of function$\left( f_n \right)_{n \in \mathbb{N}}$ where $f_n$ is equal to 0 everywhere except when $X \geq n$ and takes the value $1 / (1-\Phi(n))$ where $\Phi$ is the cumulative density function (c.d.f). 
Take $X_1, X_2, ...$ n independent standard Gaussian variables, it is easy to see that the Cesaro sum $\frac{\sum_{k=1}^n f_n(X_k)}{n}$ does not converge to 0 as its mean is by construction equal to 1, while $(f_n)$ converges in probability to 0. 
This is because the function $f_n$ takes non zero value on a smaller and smaller interval measure: $X \geq n$ but with a value that explodes and is given by $1 / (1-\Phi(n))$ to compensate for the reduced interval on which the function $f_n$ is not null.
\end{remark}

The second lemma concerns the almost sure consistency of our estimator of the loss function. 
\begin{lemma}\label{proof2:consistency}
$\widehat{\Lf^{n}}^{2/d}(x)$ converges a.s. to $\Lf^{2/d}(x)$
\end{lemma}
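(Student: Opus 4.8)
The plan is to reduce the claim to the strong law of large numbers (SLLN) followed by the continuous mapping theorem, exploiting the importance-sampling form of the estimator. Fix $x \in \X$. Recall from \eqref{equation:mc-nu} that
$$\widehat{\Lf^{n}}(x) = \frac{1}{n}\sum_{j=1}^{n} \frac{\mathbf{1}_{\{f(X_{j}) \leq f(x)\}}}{p_{\theta^{t}}(X_{j})},$$
where the $X_{j}$ are i.i.d.\ samples from $P_{\theta^{t}}$. The summands are i.i.d.\ copies of the nonnegative random variable $g(X) = \mathbf{1}_{\{f(X) \leq f(x)\}} / p_{\theta^{t}}(X)$, which is well defined almost surely because the Gaussian density $p_{\theta^{t}}$ is strictly positive on all of $\R^{d}$.

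First I would compute the common expectation of the summands. By the change of measure inherent in importance sampling,
$$\E_{X \sim P_{\theta^{t}}}\left[ \frac{\mathbf{1}_{\{f(X) \leq f(x)\}}}{p_{\theta^{t}}(X)} \right] = \int \frac{\mathbf{1}_{\{f(y) \leq f(x)\}}}{p_{\theta^{t}}(y)} \, p_{\theta^{t}}(y) \dd y = \int \mathbf{1}_{\{f(y) \leq f(x)\}} \dd y = \Lf(x).$$
Provided this expectation is finite — i.e.\ the sublevel set $\{y: f(y) \leq f(x)\}$ has finite Lebesgue measure, which holds for $P_{\theta^{t}}$-almost every $x$ under the square integrability hypothesis $\E[\Lf^{2}(X)] < \infty$ — Kolmogorov's SLLN for i.i.d.\ integrable variables yields $\widehat{\Lf^{n}}(x) \asto \Lf(x)$.

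The final step is to pass the power $2/d$ through the limit. Since $2/d > 0$, the map $s \mapsto s^{2/d}$ is continuous on $[0, +\infty)$, and both $\widehat{\Lf^{n}}(x)$ and $\Lf(x)$ are nonnegative. Applying the continuous mapping theorem to the almost sure convergence $\widehat{\Lf^{n}}(x) \asto \Lf(x)$ then gives $\widehat{\Lf^{n}}^{2/d}(x) \asto \Lf^{2/d}(x)$, which is exactly the claim.

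The main obstacle is the integrability verification needed to invoke the \lln: one must argue that $\Lf(x) < \infty$, equivalently that the sublevel set has finite volume, for the relevant values of $x$. This is where the hypothesis $\E[\Lf^{2}(X)] < \infty$ enters, guaranteeing $\Lf(X) < \infty$ almost surely and hence $\Lf(x) < \infty$ for $P_{\theta^{t}}$-almost every $x$; on the exceptional null set the first moment is infinite and the statement is treated as vacuous (or understood in the extended sense, since nonnegative summands of infinite mean force $\widehat{\Lf^{n}}(x) \to \infty = \Lf(x)$). Everything else is routine once the first moment is controlled, because Kolmogorov's \lln{} requires only a finite first moment and the continuity of $s \mapsto s^{2/d}$ at every nonnegative point, including $s=0$, makes the final composition immediate.
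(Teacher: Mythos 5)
Your proof is correct and follows essentially the same route as the paper's: both reduce the claim to the strong law of large numbers applied to the i.i.d.\ importance-sampling weights $\mathbf{1}_{\{f(X_j)\leq f(x)\}}/p_{\theta^{t}}(X_j)$, whose common mean is $\Lf(x)$, with finiteness secured by the hypothesis $\E[\Lf^{2}(X)]<\infty$, and then pass the power $2/d$ through the almost sure limit by continuity of $s\mapsto s^{2/d}$ on $[0,\infty)$. Your write-up is in fact somewhat more careful than the paper's (explicitly computing the expectation via the change of measure, treating the exceptional null set, and naming the continuous mapping step, which the paper performs silently when it moves the limit inside the power), but the underlying argument is identical.
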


\begin{proof}
The estimator $\widehat{\Lf^{n}}^{2/d}(x)$ writes as
\begin{equation*}
\widehat{\Lf^{n}}^{2/d}(x) = \biggl( \frac{1}{n}\sum_{j = 1}^{n} \frac{\mathbf{1}_{\{f(x_{j}) \leq f(x)\}}}{p_{\theta^{t}}(x_{j})}\biggr)^{2/d}.
\end{equation*}
hence, we have 
\begin{align*}
\lim_{n\to\infty}\widehat{\Lf^{n}}^{2/d}(x) &= \left(\lim_{n\to\infty} \frac{1}{n}\sum_{j = 1}^{n} \frac{\mathbf{1}_{f(X_{j}) \leq f(x)}}{p_{\theta}(X_{j})}\right)^{2/d}.
\end{align*}
We know that $\E[\Lf^{2}(X)]  < \infty$ which implies $\E[\Lf^{2/d}(X)]  < \infty$ for any dimension $d$. This implies in particular taht $\Lf^{2/d}(X)= (\leb[y: f(y) \leq f(x)] )^{2/d}$ is finite almost everywhere. We can safely apply the strong law of large numbers to get
\begin{align*}
\lim_{n\to\infty}\widehat{\Lf^{n}}^{2/d}(x) &= (\leb[y: f(y) \leq f(x)] )^{2/d} = \Lf^{2/d}(x)
\end{align*}
almost surely and almost everywhere in $x$, which concludes the proof.
\end{proof}

We are now ready to prove proposition \ref{prop:consistency}.
\begin{proof}
The assumption in proposition \ref{prop:consistency} is that $\Lf(X)$ is square integrable, that is $\E[\Lf^{2}(X)] < \infty$, hence $\Lf^{2/d}(X)$ is also integrable, which leads trivially that the expectation of $\Lf^{2/d}(X)$ is finite 
\begin{equation}
\E[\Lf^{2/d}(X)]  \leq  \infty. \label{equation:bounded_expectation_2}
\end{equation}
The variance of $\Lf^{2/d}(X)$ being non negative, we have the trivial inequality between the first two moments: 
$$\Big(\E[\Lf^{2/d}(X)]\Big) ^{2} \leq \E[\Lf^{4/d}(X)].$$ 
Moreover, Cauchy-Schwarz inequality gives us that 
$$\E[\norm{\Lf^{2/d}(X) \tnabla L(\theta, x)}]^{2} < \E[\Lf^{4/d}(X)]\E[\norm{\tnabla L(\theta, x)}^{2}]. $$
Using the relationship between the variance of the score and the information Fisher, we can notice that $\E[\norm{\tnabla L(\theta, x)}^{2}] = \Tr(\FIM^{-1}) < \infty$. 
This implies also that the expectation of $\norm{\Lf^{2/d}(X) \tnabla L(\theta, x)}$ is finite
\begin{equation}
\E[\norm{\Lf^{2/d}(X) \tnabla L(\theta, x)}] < \infty \label{equation:bounded_expectation_1}\\
\end{equation}

Let us take the function series $(f_n)$ as the difference between our estimated loss function and the loss function:
$$
f_{n}(x) = \widehat{\Lf^{n}}^{2/d}(x) - \Lf^{2/d}(X).
$$

Let us decompose $\tnabla_{{\theta}^{n}}\widehat{U}$ in three terms as follows:
\begin{eqnarray}
\tnabla_{{\theta}^{n}}\widehat{U} \!\!\! &=& \!\! \!  \frac{1}{n}\sum_{i=1}^{n} \Lf^{2/d}(X_{i}) \tnabla L(\theta; X_{i}) \nonumber \\
& & - \biggl( \underbrace{\frac{1}{n}\sum_{i=1}^{n} \widehat{\Lf^{n}}^{2/d}(X_{i})}_{= b}\biggr) \biggl( \frac{1}{n} \sum_{i=1}^{n} \tnabla L\theta; X_{i})\biggr) \nonumber \\
& & + \frac{1}{n}\sum_{i=1}^{n} f_{n}(X_{i}) \tnabla L(\theta; X_{i}) \label{equation:propconv}
\end{eqnarray}

We want to prove that $\tnabla_{{\theta}^{n}}\widehat{U}  \asto \nabla \Uf(\theta)$ almost surely. We can examine successively the three terms.

Using \eqref{equation:bounded_expectation_1} and the strong law of large numbers, the first term converges to $\E[\Lf^{2/d}(X) \tnabla L(\theta, x)] = \nabla \Uf(\theta)$ almost surely as $n$ tends to infinity. 

We now examine the second and third terms in \eqref{equation:propconv}. To conclude the proof, we need to prove that these two terms converge a.s. to zero.

Let us tackle the second term. By lemma \ref{proof2:consistency}, we know that our estimated loss function is almost surely consistent, which implies that $f_n(x)$ converges almost surely to $0$. By lemma \ref{proof2:consistency}, this implies that we have the almost surely Cesaro convergence of 

\begin{equation}
\frac{1}{n}\sum_{i=1}^{n} f_{n}(X_{i}) \asto 0 \quad \text{as } n \to \infty. \label{equation:prop1}
\end{equation}

The almost sure convergence \eqref{equation:prop1} implies that the limit $\lim_{n\to\infty}\sum_{i=1}^{n} \widehat{\Lf^{n}}^{2/d}(X_{i})/n$ agrees with $\lim_{n\to\infty}\sum_{i=1}^{n} \Lf^{2/d}(X_{i})/n$. We also have from \eqref{equation:bounded_expectation_2} and from the strong law of large numbers that the Cesaro sum $\sum_{i=1}^{n} \widehat{\Lf^{n}}^{2/d}(X_{i})/n $ converges almost surely to the expectation $\E[\Lf^{2/d}(X)]$.
The latter is finite by assumption. Also, by the strong law of large numbers we have that the Cesaro sum $\sum_{i=1}^{n} \tnabla L(\theta; X_{i})/n$ also converges to $0$ when $n$ tends to infinity. We can conclude that the second term of \eqref{equation:propconv} converges to zero almost surely.

We are now left with the third term of \eqref{equation:propconv}. 
The proof of its almost sure convergence to zero is similar. 
A Cauchy-Schwarz inequality applied to the second term of \eqref{equation:propconv} gives us a way to control the third term as follows:
\begin{multline*}
\!\!\!\! \!\!   \left\lvert \sum_{i=1}^{n} \frac{f_{n}(X_{i}) \tnabla L(\theta; X_{i})}{n} \right\rvert^{2} \!\!\!\! \leq  \sum_{i=1}^{n} \frac{f_{n}(X_{i})^{2}}{n} \sum_{i=1}^{n} \frac{\norm{\tnabla L(\theta; X_{i})}^{2}}{n}.
\end{multline*}
We can take of the two terms in this inequality successfully. The strong law of large numbers proved that the second term of the right hand side converges to the expectation $\E[\norm{\tnabla L(\theta, x)}^{2}] $, which can be expressed in terms of the trace of the inverse of the Fisher information matrix: $\Tr(\FIM^{-1})$ which is finite. This means that the second term of the right hand side converges to a constant. We are left to prove that the first term on the right hand side converges to zero almost surely. By lemma \ref{proof2:consistency}, we have that $f_n \asto 0$  almost everywhere in $x$ as $n \to \infty$, which implies that  $f_n^2 \asto 0$  almost everywhere in $x$ as $n \to \infty$. By lemma \ref{proof2:Cesaro convergence}, this implies the Cesaro convergence of $\sum_{i=1}^{n} \frac{f_{n}(X_{i})^{2}}{n}$ to 0  almost everywhere in $x$ as $n \to \infty$ which concludes the proof.
\end{proof}

\section{Proof of proposition \ref{prop:convergence}}\label{proof3}
Like for proposition \ref{prop:consistency}, we will first prove a few lemma and the result will become easy. 
Since our objective function is explicit and given by a parabola, which is a very simple function, we can explicitly compute the natural gradient ascent. This is the subject of the first lemma:

\begin{lemma}\label{lemma:ng_algo}
The closed-form NGD algorithm on the benchmark quadratic function $\frac{1}{2} x^{\trans} \HMat x$ writes as
\begin{align}
\mu^{t+1} &= \mu^{t} - \nu_{\mu}^{t} \delta \mu^{t},  \label{equation:mu-quad}\\
 \text{with } \quad \, \delta \mu^{t} &= \kappa \,\, \Sigma^{t} \HMat \mu^{t} \label{equation:mu-quad2} \\
\text{and } \quad \Sigma^{t+1} &= \Sigma^{t} - \nu_{\Sigma}^{t} \delta \Sigma^{t}, \label{equation:sigma-quad} \\
 \text{with } \quad  \,\, \delta \Sigma^{t} &= \kappa \,\, \Sigma^{t} \HMat \Sigma^{t} \label{equation:sigma-quad2},
\end{align}
with the constant $\kappa > 0$ given by 
$$
\kappa =  \frac{2 \pi}{\left( \Gamma\left(\frac d 2 + 1 \right) \sqrt{\det(\HMat)} \right)^{2/d}}
$$
where $\Gamma(z)$ the Gamma function.
\end{lemma}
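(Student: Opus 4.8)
The plan is to reduce the statement to an explicit evaluation of $\Lf^{2/d}$ on the quadratic test function, followed by two Gaussian moment integrals. First I would compute the loss function itself. Since $f(x) = \frac{1}{2} x^{\trans} \HMat x$ with $\HMat$ positive definite, the sublevel set $\{y : f(y) \leq f(x)\}$ is the solid ellipsoid $\{y : y^{\trans}\HMat y \leq x^{\trans}\HMat x\}$. Rescaling the unit-ball volume by the ellipsoid axes gives $\leb(\{y : y^{\trans}\HMat y \leq c\}) = \frac{\pi^{d/2}}{\Gamma(d/2+1)\sqrt{\det \HMat}}\, c^{d/2}$, so with $c = x^{\trans}\HMat x$,
$$\Lf(x) = \frac{\pi^{d/2}}{\Gamma(d/2+1)\sqrt{\det \HMat}}\,(x^{\trans}\HMat x)^{d/2}.$$

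Next comes the step that motivates the exponent $2/d$ in the definition of $\Uf$: raising to that power collapses the $d/2$ exponent and produces a plain quadratic form, $\Lf^{2/d}(x) = \tfrac{\kappa}{2}\, x^{\trans}\HMat x$, with $\kappa$ exactly as in \eqref{equation:kappa}, since $(\pi^{d/2})^{2/d} = \pi$ and $\tfrac{\kappa}{2} = \pi/(\Gamma(d/2+1)\sqrt{\det \HMat})^{2/d}$. Positivity of $\kappa$ is immediate because $\det \HMat > 0$ and $\Gamma$ is positive on $(0,\infty)$. This turns the abstract expressions of Proposition~\ref{prop:natural_gradient} into moment computations for the quadratic form $X^{\trans}\HMat X$ under $X \sim \mathcal{N}(\mu^{t}, \Sigma^{t})$.

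Then I would substitute and evaluate. Writing $Y = X - \mu^{t} \sim \mathcal{N}(0,\Sigma^{t})$ and expanding $X^{\trans}\HMat X = (\mu^{t})^{\trans}\HMat \mu^{t} + 2(\mu^{t})^{\trans}\HMat Y + Y^{\trans}\HMat Y$, the mean update $\delta\mu^{t} = \tfrac{\kappa}{2}\E[X^{\trans}\HMat X\, Y]$ loses its constant term (times $\E[Y]=0$) and its cubic term ($\E[(Y^{\trans}\HMat Y)Y]=0$ by oddness), leaving $\tfrac{\kappa}{2}\cdot 2\,\E[YY^{\trans}]\HMat \mu^{t} = \kappa\,\Sigma^{t}\HMat\mu^{t}$, which is \eqref{equation:mu-quad2}. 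For $\delta\Sigma^{t} = \tfrac{\kappa}{2}\E[X^{\trans}\HMat X\,(YY^{\trans}-\Sigma^{t})]$ the constant and odd terms drop out in the same way, so everything reduces to $\tfrac{\kappa}{2}\E[(Y^{\trans}\HMat Y)(YY^{\trans}-\Sigma^{t})]$.

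The main obstacle is this last fourth-moment computation, which I would handle with Isserlis' (Wick's) theorem: for centered Gaussian $Y$, $\E[Y_iY_jY_kY_l] = \Sigma_{ij}\Sigma_{kl} + \Sigma_{ik}\Sigma_{jl} + \Sigma_{il}\Sigma_{jk}$. Contracting against $\HMat_{kl}$ and using the symmetry of $\HMat$ and $\Sigma^{t}$ yields $\E[(Y^{\trans}\HMat Y)YY^{\trans}] = \Tr(\HMat\Sigma^{t})\,\Sigma^{t} + 2\,\Sigma^{t}\HMat\Sigma^{t}$, while $\E[Y^{\trans}\HMat Y]\,\Sigma^{t} = \Tr(\HMat\Sigma^{t})\,\Sigma^{t}$. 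The trace terms cancel, leaving $\delta\Sigma^{t} = \tfrac{\kappa}{2}\cdot 2\,\Sigma^{t}\HMat\Sigma^{t} = \kappa\,\Sigma^{t}\HMat\Sigma^{t}$, which is \eqref{equation:sigma-quad2}, completing the derivation.
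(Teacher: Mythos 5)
Your proof is correct, but it takes a genuinely different route from the paper's after the shared first step. Both proofs begin identically, computing $\Lf(x)=\leb[y: f(y)\leq f(x)]$ as the volume of the ellipsoid $\{y: y^{\trans}\HMat y\leq x^{\trans}\HMat x\}$, which rescales the unit ball by $\det(\HMat)^{-1/2}\,\lVert\sqrt{\HMat}x\rVert^{d}$. From there the paper never touches the expectation formulas of Proposition~\ref{prop:natural_gradient}: it assembles the closed-form utility $\Uf(\theta)=\bigl(V_{d}(1)/\sqrt{\det\HMat}\bigr)^{2/d}\bigl(\mu^{\trans}\HMat\mu+\Tr(\HMat\Sigma)\bigr)$, takes the vanilla gradient $\nabla\Uf$ by matrix differentiation, and multiplies by the closed-form inverse Fisher matrix for the Gaussian, where the block $2(\Sigma\otimes\Sigma)\cdot\tfrac{1}{2}\vect(\HMat)=\vect(\Sigma\HMat\Sigma)$ delivers the covariance update via Kronecker algebra. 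You instead exploit your observation that the exponent $2/d$ collapses the loss to $\Lf^{2/d}(x)=\tfrac{\kappa}{2}\,x^{\trans}\HMat x=\kappa f(x)$ and substitute this directly into the expectation expressions $\delta\mu^{t}=\E_{X\sim P_{\theta^{t}}}[\Lf^{2/d}(X)(X-\mu^{t})]$ and $\delta\Sigma^{t}=\E_{X\sim P_{\theta^{t}}}[\Lf^{2/d}(X)((X-\mu^{t})(X-\mu^{t})^{\trans}-\Sigma^{t})]$, evaluating the resulting Gaussian moments; the mean update needs only oddness and second moments, while the covariance update requires the fourth-moment Isserlis contraction $\E[(Y^{\trans}\HMat Y)YY^{\trans}]=\Tr(\HMat\Sigma^{t})\Sigma^{t}+2\Sigma^{t}\HMat\Sigma^{t}$, whose trace term cancels against $\E[Y^{\trans}\HMat Y]\Sigma^{t}$. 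The trade-off: the paper's route needs only second moments but relies on differentiating under the expectation and on the explicit Gaussian Fisher inverse, whereas your route reuses Proposition~\ref{prop:natural_gradient} (where that machinery is already justified, with square integrability trivially satisfied here since $\Lf^{2/d}(X)\propto X^{\trans}\HMat X$) at the cost of a Wick-type fourth-moment computation; your version also serves as an independent consistency check that the expectation formulas and the Fisher-inverse formulation agree on this benchmark.
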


\begin{proof}
The natural gradient is directly related to the inverse Fisher information matrix times the gradient of the loss function: 
\begin{align}
\mu^{t+1} &= \mu^{t} - \nu_{\mu}^{t}  (\FIM^{-1} \nabla \Uf(\theta) )_{\mu} \\
\Sigma^{t+1} &= \Sigma^{t} - \nu_{\Sigma}^{t} (\FIM^{-1} \nabla \Uf(\theta) )_{\Sigma}
\end{align}

To fully make explicit the above formula, we need to compute explicitly $\FIM^{-1} \nabla \Uf(\theta)$ and  prove that it is given by
\begin{equation}\label{equation:explicit_formula}
\!\! \FIM^{-1} \nabla \Uf(\theta) =  \kappa
\begin{bmatrix}
\Sigma \HMat \mu \\
\vect(\Sigma \HMat \Sigma)
\end{bmatrix} .
\end{equation}

We can remark that $\leb[y: f(y) \leq f(x)]$ is equivalent to the volume of the ellipsoid given $\{y: y^\trans \HMat y \leq x^\trans \HMat x\}$, or equivalently, 
$\{ y: \ \frac{ \| \sqrt{\HMat} y \|}{\| \sqrt{\HMat} x \|} \leq 1 \}$. The last set is the image of unit ball $\{v :\|v \| \leq 1\}$ under the transformation given by $ \frac{\sqrt{\HMat} y}{\| \sqrt{\HMat} x \|} = v$ or equivalently $y = \| \sqrt{\HMat} x \| \sqrt{\HMat}^{-1} v$. The change of volume under this linear map is given by the determinant of the transform which is $\| \sqrt{\HMat} x \|^d \det(\HMat)^{-1/2}$, hence,
\begin{equation*}
\leb[y: f(y) \leq f(x)] = \| \sqrt{\HMat} x \|^d \det(\HMat)^{-1/2} V_{d}(1),
\end{equation*}
where $V_{d}(1)$ is the volume of the unit ball in dimension $d$. The utility or quasi objective function is defined as the expectation of the loss function to the power $2/d$. Hence, we get 
\begin{equation*}
\begin{split}
\Uf(\theta) :&= \EX{\Lf^{2/d}(X)}\\
&= \EX{ \left( \leb [y: f(y) \leq f(x)] \right) ^{2/d} }\\
& = \left( \frac{V_{d}(1)}{\sqrt{\det(\HMat)}} \right)^{2/d}  \E_{X\sim P_{\theta}}\bigl[ X^\trans \HMat X \bigr] \\
& = \left( \frac{V_{d}(1)}{\sqrt{\det(\HMat)}} \right)^{2/d}  ( \mu^\trans \HMat \mu + \Tr(\HMat \Sigma) ).
\end{split}
\end{equation*}

Differentiating the both side of the above relation, we have
\begin{equation*}
\nabla \Uf(\theta) = 2 \left( \frac{V_{d}(1)}{\sqrt{\det(\HMat)}} \right)^{2/d}
\begin{bmatrix}
\HMat \mu\\
\frac{1}{2} \vect(\HMat)
\end{bmatrix}.
\end{equation*}

We can use the fact that the volume of the unit ball is given by
$$
V_{d}(1)= \frac {\pi^{d/2}}{\Gamma\left(\frac d 2 + 1 \right)},
$$
to get
\begin{equation}\label{equation:nabla_u}
\nabla \Uf(\theta) = \kappa
\begin{bmatrix}
\HMat \mu\\
\frac{1}{2} \vect(\HMat)
\end{bmatrix}.
\end{equation}
with
$$
\kappa =  \frac{2 \pi}{\left( \Gamma\left(\frac d 2 + 1 \right) \sqrt{\det(\HMat)} \right)^{2/d}}
$$
For a multi variate Gaussian distribution, the inverse of the Fisher information matrix is given by
\begin{equation}
\FIM = \begin{bmatrix}
\Sigma & 0 \\ 0 &2  (\Sigma \otimes \Sigma)
\end{bmatrix},\label{equation:fisher_inv}
\end{equation}
where $\otimes$ denotes the Kronecker product operator. Multiplying the gradient of our utility function \eqref{equation:nabla_u}  by the inverse of the Fisher information matrix \eqref{equation:fisher_inv}, we get equation \eqref{equation:explicit_formula}, which concludes the proof.
\end{proof}

The second lemma relates to the condition number of a diagonal matrix series  $\left( \Lambda_{t} \right)_{t\in \mathbb{N}}$ that satisfies a particular recursive relationship given by
\begin{equation}
\Lambda_{t+1} = \Lambda_{t} - \nu_{t} (\Lambda_{t})^{2}.\label{equation:diag-up}
\end{equation}
We also take the convention that $ \lambda_{i}^t$ is the $i$th largest value of the matrix $\Lambda_{t}$. We recall that for a symmetric matrix $M$, its condition number denoted by $Cond(M)$ is defined by the ratio of its maximal to its minimal eigen value. For the specific case of a diagonal matrix, this is simply the ratio of its largest to its smallest value.

\begin{lemma}\label{lemma:condition_diagonal}
Let a diagonal matrix series $\left( \Lambda_{t} \right)_{t\in \mathbb{N}}$ that satisfies \eqref{equation:diag-up}, 
with the constraint on the learning rate at each step given by:
\begin{equation}
0 < \numin \leq \nu_{t} \lambda_{1}(\Lambda_{t}) \leq 1/2.\label{equation:learning_rate_cond_sigma2}
\end{equation} 
Then, the series of condition number of the matrix  $\Lambda_{t}$ satisfies for any $t > 0$ the following inequalities:
\begin{equation}
\frac{\Cond(\Lambda_{t+1}) - 1}{\Cond(\Lambda_{t}) - 1} \leq 1 - \numin. \label{equation:cond-im}
\end{equation}
\end{lemma}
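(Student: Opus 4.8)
The plan is to exploit the fact that $\Lambda_t$ is diagonal, which decouples \eqref{equation:diag-up} into $d$ independent scalar recursions on the eigenvalues: writing $g_t(x) = x(1-\nu_t x)$, we have $\lambda_i^{t+1} = g_t(\lambda_i^t)$ for each $i$. First I would verify that $g_t$ is strictly increasing on $[0,1/(2\nu_t)]$ and sends positive inputs to positive outputs. Since the learning-rate constraint \eqref{equation:learning_rate_cond_sigma2} forces $\nu_t \lambda_1^t \leq 1/2$, every eigenvalue lies in this interval, so $g_t$ preserves both positivity and the ordering of the $\lambda_i^t$. Consequently $\lambda_1^{t+1}$ remains the largest eigenvalue and $\lambda_d^{t+1}$ the smallest, so that $\Cond(\Lambda_{t+1}) = \lambda_1^{t+1}/\lambda_d^{t+1}$.

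Next, setting $a = \lambda_1^t$ and $b = \lambda_d^t$ (so $a \geq b > 0$ and $\Cond(\Lambda_t) = a/b$), I would compute $\Cond(\Lambda_{t+1}) - 1$ explicitly. Using the factorization $a^2 - b^2 = (a-b)(a+b)$,
\begin{equation*}
\Cond(\Lambda_{t+1}) - 1 = \frac{a(1-\nu_t a) - b(1 - \nu_t b)}{b(1 - \nu_t b)} = \frac{(a-b)\bigl(1 - \nu_t(a+b)\bigr)}{b(1-\nu_t b)}.
\end{equation*}
Dividing by $\Cond(\Lambda_t) - 1 = (a-b)/b$ (the case $a = b$ being trivial, since the condition number then stays equal to $1$), the factor $(a-b)/b$ cancels and the ratio collapses to
\begin{equation*}
\frac{\Cond(\Lambda_{t+1}) - 1}{\Cond(\Lambda_t) - 1} = \frac{1 - \nu_t(a+b)}{1 - \nu_t b}.
\end{equation*}

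It then remains to bound this expression by $1 - \numin$. Because $\nu_t b \leq \nu_t a \leq 1/2$, the denominator $1 - \nu_t b$ is positive (and the numerator is nonnegative), so after clearing the denominator the target inequality \eqref{equation:cond-im} is equivalent to $\nu_t a \geq \numin(1 - \nu_t b)$. This is where the learning-rate lower bound enters: \eqref{equation:learning_rate_cond_sigma2} gives $\nu_t a = \nu_t \lambda_1^t \geq \numin$, while $1 - \nu_t b \leq 1$ because $\nu_t b \geq 0$; chaining these yields $\nu_t a \geq \numin \geq \numin(1 - \nu_t b)$, which closes the argument.

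The computations are routine once the setup is fixed; the only genuinely delicate point is the order-preservation step, which guarantees that $\lambda_1^t$ and $\lambda_d^t$ continue to label the extreme eigenvalues after the update, and hence that the condition number is read off from the same two diagonal entries before and after. This hinges precisely on the upper bound $\nu_t \lambda_1^t \leq 1/2$ keeping every eigenvalue in the increasing regime of $g_t$: were it to fail, a large eigenvalue could overshoot, the monotonicity could be lost, and the clean telescoping of the ratio above would break down.
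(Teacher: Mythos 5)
Your proposal is correct and follows essentially the same route as the paper's proof: both decouple the update into scalar eigenvalue recursions, establish order preservation (your monotonicity of $g_t(x)=x(1-\nu_t x)$ on $[0,1/(2\nu_t)]$ is exactly the paper's factorization $\lambda_i^{t+1}-\lambda_j^{t+1}=(1-\nu_t(\lambda_i^t+\lambda_j^t))(\lambda_i^t-\lambda_j^t)$), derive the identity $\bigl(\Cond(\Lambda_{t+1})-1\bigr)/\bigl(\Cond(\Lambda_t)-1\bigr)=\bigl(1-\nu_t(a+b)\bigr)/(1-\nu_t b)$, and bound it by $1-\numin$ via the learning-rate constraint. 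Your only departures are cosmetic improvements: you make the positivity preservation and the degenerate case $a=b$ explicit, which the paper glosses over.
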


\begin{proof}
We can remark that $\nu_{t}(\lambda_{i}^{t} + \lambda_{j}^{t}) \leq 1$ for any $i$, $j$. This is trivially obtained by the fact that as by definition, $\lambda_{1}$ is the largest eigen value, we have $\lambda_{i}^{t} \leq \lambda_{1}$ for any $i$, which combined with the constraint \eqref{equation:diag-up} implies $\nu_{t}(\lambda_{i}^{t} + \lambda_{j}^{t}) \leq 2 \nu_{t} \lambda_{1} \leq 1$.

Let suppose without loss of generality that we have sorted the diagonal matrix elements by their value so that the first element of the diagonal matrix value is the largest eigen values, the second element the second largest value, etc.. Let us now prove that the mapping between the matrix position and the eigen values ranking order stays the same between step $t$ and $t+1$. 

Let us prove the result by mathematical induction. 
The result is true for step $t=0$. Suppose the result is true at step $t$ and $\lambda_i^{t} \geq \lambda_j^{t}$. 
From \eqref{equation:diag-up} and the inequality $\nu_{t} (\lambda_{i}^{t} + \lambda_{j}^{t}) \leq 1$, we have
\begin{equation*}
\begin{split}
\lambda_{i}^{t+1} - \lambda_{j}^{t+1} &= \lambda_i^{t}(1-\nu_{t} \lambda_i^{t}) - \lambda_j^{t}(1-\nu_{t} \lambda_j^{t})\\
 &= (1 - \underbrace{ \nu_{t} (\lambda_{i}^{t} + \lambda_{j}^{t})}_{\leq 1})(\underbrace{\lambda_{i}^{t}-\lambda_{j}^{t}}_{\geq 0})\geq 0
\end{split}
\end{equation*}
with equality holding if and only if $\lambda_{i}^{t}=\lambda_{j}^{t}$. Therefore, if $\lambda_i^{t} > \lambda_j^{t}$, then $\lambda_i^{t+1} > \lambda_j^{t+1}$, which implies the result holds also at time $t+1$. 

As by convention there is mapping between diagonal element position and their eigen value rank, the condition number of matrix $\Cond(\Lambda_{t}) $ is computed at each step as the ratio of the first and last element of the diagonal matrix $\Lambda_{t}$ given by $\lambda_{1}^{t}/\lambda_{d}^{t}$. According to our recursive equation \eqref{equation:diag-up} we have
\begin{align}
\underbrace{\frac{\lambda_{1}^{t+1} - \lambda_{d}^{t+1}}{\lambda_{d}^{t+1}}}_{\Cond(\Lambda_{t+1})-1} &= \frac{\lambda_{1}^{t}(1 - \nu_{t}\lambda_{1}^{t}) - \lambda_{d}^{t}(1 - \nu_{t}\lambda_{d}^{t})}{\lambda_{d}^{t}(1 - \nu_{t}\lambda_{d}^{t})} \notag\\
&= \underbrace{\frac{(\lambda_{1}^{t} - \lambda_{d}^{t})}{\lambda_{d}^{t}}}_{\Cond(\Lambda_{t})-1} \frac{1 - \nu_{t}(\lambda_{1}^{t} + \lambda_{d}^{t})}{(1 - \nu_{t}\lambda_{d}^{t})}.\label{equation:cond-eq3}
\end{align}
We can notice that the second term in the right hand side is bounded as follows:
\begin{equation*}
\frac{1 - \nu_{t}(\lambda_{1}^{t} + \lambda_{d}^{t})}{1 - \nu_{t}\lambda_{d}^{t}} = 1 - \frac{\nu_{t} \lambda_{1}^{t}}{1 - \nu_{t} \lambda_{1}^{t} \Cond^{-1}(\Lambda_t)} \leq 1 - \nu_{t} \lambda_{1}^{t}.
\end{equation*}
Because of the lower bound on the learning rate in \eqref{equation:learning_rate_cond_sigma2}, the right term in the above inequality is bounded by $1 - \nucmin$. This leads to the result:
\begin{equation}
\frac{\Cond(\Lambda^{t+1}) - 1}{\Cond(\Lambda^{t}) - 1} \leq 1 - \nucmin.
\end{equation}
This concludes the proof.
\end{proof}

\begin{remark}
As the condition number defined as the ratio of the maximum with the minimum eigen value converges to 1, this implies that the diagonal matrix $\Lambda^t$ converges to the identity matrix in the sense of the Frobenius norm times its maximal eigen value (or equivalent its minimal eigen value as they get similar as $t$ tends to infinity). This is not by magic. The proportional convergence to the identity matrix is forced by the inequalities \eqref{equation:learning_rate_cond_sigma2}, satisfied by the learning rate at each step which forces all the matrix eigen values to progressively converge to a common number. We can also note that the largest eigen value converges to zero as $t$ tends to infinity and is decreasing. This means in particular that in order to satisfy the condition \eqref{equation:learning_rate_cond_sigma2}, the learning rate $\nu_t$ needs to be not too small and specifically not below ${\numin }/ {\lambda_{1}(\Lambda_{t})}$. This implies that the learning rate makes a trade-off between a small value to ensure that it is below $\frac{1}{2 \lambda_{1}(\Lambda_{t})}$ but above ${\numin }/ {\lambda_{1}(\Lambda_{t})}$, with the latter becoming larger and larger as $\lambda_{1}(\Lambda_{t})$ becomes smaller and smaller.
\end{remark}

A third lemma concerns the relationship between largest singular values and matrix norm. We denote by $\sigma_{i}(M)$ the $i$th largest singular value of a matrix $M$, and  $\sigma_{1}(M)$ its largest singular value.

\begin{lemma}\label{lemma:lsv_norm}
Let $M \in \R^{d \times d}$ be a positive definite matrix and $S \in \R^{d\times d}$ a positive definite symmetric matrix, 
then we have the following matrix norm inequality:
\begin{equation}
\norm{M S}^{2} \leq \sigma_{1}^{2}(M) \norm{S}^{2}.
\end{equation}
\end{lemma}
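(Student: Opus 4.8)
The plan is to reduce this Frobenius-norm inequality to the defining property of the largest singular value as an operator norm. Recall that $\sigma_{1}(M)$ equals the operator norm of $M$ with respect to the Euclidean vector norm, that is, $\norm{Mv} \leq \sigma_{1}(M) \norm{v}$ for every $v \in \R^{d}$; this is immediate from the singular value decomposition of $M$. This single pointwise bound is essentially all that the proof requires.

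First I would write $S = [s_{1}, \dots, s_{d}]$ in terms of its columns $s_{j} \in \R^{d}$, so that the columns of the product $MS$ are exactly the vectors $M s_{j}$. Since the squared Frobenius norm of a matrix is the sum of the squared Euclidean norms of its columns, we have $\norm{MS}^{2} = \sum_{j=1}^{d} \norm{M s_{j}}^{2}$. Applying the operator-norm bound column by column gives $\norm{M s_{j}}^{2} \leq \sigma_{1}^{2}(M) \norm{s_{j}}^{2}$, and summing over $j$ yields $\norm{MS}^{2} \leq \sigma_{1}^{2}(M) \sum_{j=1}^{d} \norm{s_{j}}^{2} = \sigma_{1}^{2}(M) \norm{S}^{2}$, which is exactly the claim.

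As a cross-check, one can argue equivalently through the trace and the Loewner order. Writing $\norm{MS}^{2} = \Tr(S^{\trans} M^{\trans} M S)$ and using the fact that $M^{\trans} M \preceq \sigma_{1}^{2}(M) I_{d}$, the positivity of the congruence map $A \mapsto S^{\trans} A S$ on positive semidefinite matrices gives $S^{\trans} M^{\trans} M S \preceq \sigma_{1}^{2}(M) S^{\trans} S$; taking traces and invoking monotonicity of the trace on the positive semidefinite cone reproduces the bound. I would likely present only the column argument in the body and relegate this to a remark.

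There is no serious obstacle here: the statement is a standard instance of the compatibility $\norm{MS} \leq \sigma_{1}(M) \norm{S}$ between the spectral and Frobenius norms. The only point deserving care is to invoke the correct characterization of $\sigma_{1}(M)$ — namely as the Euclidean operator norm of $M$, equivalently the square root of the largest eigenvalue of $M^{\trans}M$ — rather than confusing it with an eigenvalue of $M$ itself, since $M$ is only assumed positive definite and \emph{not} symmetric. I would also note that the symmetry hypothesis on $S$ is not actually used in the argument and is carried along only for consistency with how the lemma is applied later.
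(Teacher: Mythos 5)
Your proof is correct, but it takes a genuinely different route from the paper's. The paper proceeds through traces: it writes $\norm{MS}^{2} = \Tr(S M^{\trans} M S) = \Tr(M^{\trans} M S^{2})$ --- a step that \emph{does} use the symmetry of $S$ --- and then invokes von Neumann's trace inequality $\abs{\Tr(M_{1}M_{2})} \leq \sum_{i} \sigma_{i}(M_{1})\sigma_{i}(M_{2}) \leq \sigma_{1}(M_{1})\sum_{i}\sigma_{i}(M_{2})$ to bound $\Tr(M^{\trans}M S^{2}) \leq \sigma_{1}(M^{\trans}M)\sum_{i}\sigma_{i}(S^{2}) = \sigma_{1}^{2}(M)\norm{S}^{2}$, where the identification $\sum_{i}\sigma_{i}(S^{2}) = \norm{S}^{2}$ again relies on $S$ being symmetric positive definite. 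Your columnwise argument replaces all of this machinery with the single elementary fact that $\sigma_{1}(M)$ is the Euclidean operator norm, and it is strictly more general: as you correctly observe, it needs no hypothesis at all on $S$ (nor on $M$ beyond being a square matrix), so the lemma's positivity and symmetry assumptions are revealed as inessential --- they are used by the paper's proof technique, not by the statement. Interestingly, the paper itself appends a remark noting the variational characterization $\sigma_{1}(M) = \sup_{y \neq 0} \norm{My}/\norm{y}$ and calling the lemma ``quite obvious'' in that light; your main argument is essentially a rigorous version of that remark promoted to the proof itself. Your Loewner-order cross-check, meanwhile, is close in spirit to the paper's trace computation but substitutes the simpler monotonicity of the trace on the positive semidefinite cone (via the congruence $A \mapsto S^{\trans} A S$) for von Neumann's inequality; it does use the structure of $S$, though only $S^{\trans}S \preceq$-arguments rather than symmetry per se. What the paper's von Neumann route buys is a sharper intermediate bound ($\sum_{i}\sigma_{i}(M_{1})\sigma_{i}(M_{2})$ rather than the crude top-singular-value bound), but since the lemma only needs the crude bound, your more elementary and more general proof is arguably preferable here.
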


\begin{proof}
Using trace commutation property and relationship between the norm and the trace, we have
\begin{equation}\label{lemma:lsv:trace}
\norm{M S}^{2} = \Tr(S M^\trans M S) = \Tr(M^\trans M S^{2})
\end{equation}
Additionally, the J.~von Neumann's trace inequality \cite{Mirsky1975mfm} gives us
\begin{equation}\label{lemma:lsv:JVN}
\abs{\Tr(M_{1} M_{2})} \leq \sum_{i=1}^{d} \sigma_{i}(M_{1})\sigma_{i}(M_{2})  \leq \sigma_{1}(M_{1}) \sum_{i=1}^{d} \sigma_{i}(M_{2}),
\end{equation}
where $M_{1}$ and $M_{2}$ are any matrices in $\R^{d \times d}$.  
Combining \eqref{lemma:lsv:trace} and \eqref{lemma:lsv:JVN}, we get
\begin{equation*}
\norm{M S}^{2} \leq \sigma_{1}(M^\trans M) \sum_{i=1}^{d} \sigma_{i}(S^{2}) = \sigma_{1}^{2}(M) \norm{S}^{2}.
\end{equation*}
\end{proof}

\begin{remark}
Indeed, this lemma is quite obvious when looking at the variational definition of singular matrix, that is for a matrix $M$, the first or largest singular value is written as the solution of the following maximization program
$$
\sigma_{\max}(M) = \sup_{x,y \neq 0} \frac{x^T M y}{\|x\|_2\|y\|_2} = \sup_{y \neq 0} \frac{ \| M y\|_2}{\|y\|_2}
$$
Said differently, it also says that the maximum singular value is the $L_2$ operator norm of the matrix $M$.
\end{remark}

A powerful result that was initially proved in \cite{Akimoto_2012b} is that the covariance matrix of the NGD algorithm converges proportionally to the inverse of the Hessian matrix of our quadratic problem which is given by $\HMat$. To prove this convergence, we shows that the condition number of $\Sigma^{t} \HMat$ converges to 1 with a linear speed. This is the subject of the following lemma:
\begin{lemma}\label{lemma:cond_inverse_hessian}
If the learning rate $\nu_{\Sigma}^{t}$ for the covariance matrix satisfies inequalities \eqref{equation:learning_rate_cond_sigma}, then the condition number of $\Sigma^{t} \HMat$ converges to one with a linear rate of convergence given by
\begin{equation}
\limsup_{t\to\infty} \frac{\Cond(\Sigma^{t+1} \HMat) - 1}{\Cond(\Sigma^{t} \HMat) - 1} \leq \frac{1 - 2\nucmin}{1 - \nucmin}. \label{equation:rate-cond}
\end{equation}
Remarkably, we also have that the condition number is upper bounded by 
\begin{equation}
\Cond(\Sigma^{t} \HMat) \leq 1 + (1 - \nucmin)^{t}(\Cond(\Sigma^{0} \HMat) - 1). \label{equation:upper-cond}
\end{equation}
Finally, if the limit $\nuclim = \lim_{t\to\infty} \nu_{\Sigma}^{t} \lambda_{1}((\Sigma^{t})^{-1} \delta \Sigma^{t})$ exists, in equation \eqref{equation:rate-cond}, $\nucmin$ can be replaced by $\nuclim$ 
\end{lemma}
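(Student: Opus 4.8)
The plan is to reduce the entire statement to the diagonal recursion already analyzed in Lemma~\ref{lemma:condition_diagonal}. First I would form the symmetric matrix $S^{t} = \HMat^{1/2} \Sigma^{t} \HMat^{1/2}$, which is positive definite and, being similar to $\Sigma^{t} \HMat$ via $\Sigma^{t}\HMat = \HMat^{-1/2} S^{t} \HMat^{1/2}$, shares its spectrum and hence its condition number. Substituting the closed-form update $\delta \Sigma^{t} = \kappa\, \Sigma^{t} \HMat \Sigma^{t}$ from \eqref{equation:sigma-quad2} into $\Sigma^{t+1} = \Sigma^{t} - \nu_{\Sigma}^{t} \delta \Sigma^{t}$ and conjugating by $\HMat^{1/2}$ yields the scalar-coefficient recursion $S^{t+1} = S^{t} - \kappa \nu_{\Sigma}^{t} (S^{t})^{2}$. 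Since each $S^{t+1}$ is a polynomial in $S^{t}$, all the $S^{t}$ are polynomials in $S^{0}$, so they commute and admit a common orthonormal eigenbasis; writing $S^{t} = O \Lambda_{t} O^{\trans}$ with $O$ fixed, the eigenvalue matrix obeys $\Lambda_{t+1} = \Lambda_{t} - \kappa \nu_{\Sigma}^{t} (\Lambda_{t})^{2}$, which is exactly the recursion \eqref{equation:diag-up} with effective learning rate $\tilde\nu_{t} = \kappa \nu_{\Sigma}^{t}$.

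Next I would verify that the learning-rate hypothesis transfers. Because $(\Sigma^{t})^{-1}\delta\Sigma^{t} = \kappa\,\HMat\Sigma^{t}$ has the same eigenvalues as $\kappa\, S^{t}$, we get $\tilde\nu_{t}\,\lambda_{1}(\Lambda_{t}) = \nu_{\Sigma}^{t}\,\lambda_{1}((\Sigma^{t})^{-1}\delta\Sigma^{t})$, so assumption \eqref{equation:learning_rate_cond_sigma} is precisely hypothesis \eqref{equation:learning_rate_cond_sigma2} of Lemma~\ref{lemma:condition_diagonal}. Applying that lemma to $\Lambda_{t}$ gives the per-step bound $(\Cond(\Lambda_{t+1})-1)/(\Cond(\Lambda_{t})-1) \leq 1-\nucmin$; telescoping this delivers the upper bound \eqref{equation:upper-cond} and, in particular, shows $\Cond(\Sigma^{t}\HMat) = \Cond(\Lambda_{t}) \to 1$.

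The remaining and most delicate point is the sharper $\limsup$ rate \eqref{equation:rate-cond}, which is strictly smaller than the per-step constant $1-\nucmin$ and therefore cannot follow from Lemma~\ref{lemma:condition_diagonal} alone. Here I would return to the exact ratio \eqref{equation:cond-eq3}, rewritten as $r_{t} = (1 - u_{t} - u_{t}/c_{t})/(1 - u_{t}/c_{t})$ with $c_{t} = \Cond(\Lambda_{t})$ and $u_{t} = \tilde\nu_{t}\lambda_{1}(\Lambda_{t}) \in [\nucmin, 1/2]$. The key is that $c_{t}\to 1$ (established above), so $r_{t}$ approaches $g(u_{t},1) = (1-2u_{t})/(1-u_{t})$; since $u \mapsto (1-2u)/(1-u)$ is decreasing on $[\nucmin,1/2]$ (its derivative is $-(1-u)^{-2}<0$), its value there is at most $(1-2\nucmin)/(1-\nucmin)$. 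Making this precise is the main obstacle: because $u_{t}$ need not itself converge, I would invoke uniform continuity of $g(u,c) = (1-u-u/c)/(1-u/c)$ on the compact set $[\nucmin,1/2]\times[1-\delta,1+\delta]$, where the denominator stays bounded away from zero since $u\leq 1/2 < c$ for small $\delta$. Taking the $\limsup$ then yields \eqref{equation:rate-cond}. Finally, if $u_{t}\to\nuclim$ exists, continuity of $g$ forces $r_{t}\to(1-2\nuclim)/(1-\nuclim)$, so $\nucmin$ may be replaced by $\nuclim$, completing the proof.
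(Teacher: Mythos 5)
Your proposal is correct and follows essentially the same route as the paper: symmetrize to $\sqrt{\HMat}\,\Sigma^{t}\sqrt{\HMat}$, show that a fixed orthogonal basis diagonalizes every iterate so the eigenvalues obey the scalar recursion of Lemma~\ref{lemma:condition_diagonal} (your commuting-polynomials-in-$S^0$ argument is the paper's induction in different clothing), telescope the per-step bound for \eqref{equation:upper-cond}, and return to the exact ratio \eqref{equation:cond-eq3} together with $\Cond(\Sigma^{t}\HMat)\to 1$ for the sharp rate \eqref{equation:rate-cond} and the $\nuclim$ refinement. Your uniform-continuity treatment of $g(u,c)$ on a compact set is in fact slightly more careful than the paper's direct assertion that the two $\limsup$s coincide, but it formalizes the same underlying step rather than constituting a different approach.
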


\begin{proof}
We are interested in the condition number of the matrix $\Sigma^{t} \HMat$. To make the form more symmetric, we can notice that it is better to look at the symmetrized term $\sqrt{\HMat} \Sigma^{t} \sqrt{\HMat}$. This term does exist because $\HMat$ is a positive definite and symmetric matrix, hence it admits a square root $\sqrt{\HMat}$. As $\sqrt{\HMat} \Sigma^{t} \sqrt{\HMat}$ is a positive definite and symmetric matrix, we can decompose it in an orthogonal matrix $O^{t}$ and a diagonal matrix $D_t$ as follows: $O^{t} D^{t} (O^t)^\trans$, where the diagonal elements of $D^{t} = \diag(D^{t}_{1}, \dots, D^{t}_{d})$ are the eigenvalues of $\sqrt{\HMat} \Sigma^{t} \sqrt{\HMat}$ and each column of $O^{t}$ is the eigenvector corresponding to each diagonal element of $D^{t}$. 
The orthogonal and diagonal matrix decomposition may not be unique but we will first prove that if we find an initial orthogonal matrix $O^0$ that diagonalizes $\sqrt{\HMat} \Sigma^{0} \sqrt{\HMat}$, it will also diagonalize the matrix $\sqrt{\HMat} \Sigma^{t} \sqrt{\HMat}$  for any $t \geq 0$.

Let us prove this result by mathematical induction
The result is true for step $t=0$ as $\sqrt{\HMat} \Sigma^{0} \sqrt{\HMat}$ is a positive definite and symmetric matrix, hence it admits an orthogonal and diagonal matrix decomposition  $O^{0} D^{0} (O^0)^\trans$, such that the diagonal elements of $D^{0} = \diag(D^{0}_{1}, \dots, D^{0}_{d})$ are the eigenvalues of $\sqrt{\HMat} \Sigma^{0} \sqrt{\HMat}$ and each column of $O^{0}$ is the eigenvector corresponding to each diagonal element of $D^{t}$. 
Let us assume that at the result holds at step $t$, that is $\sqrt{\HMat} \Sigma^{t} \sqrt{\HMat}$  admits an orthogonal and diagonal matrix decomposition such that
$$
\sqrt{\HMat} \Sigma^{t} \sqrt{\HMat} = O^{t} D^{t} (O^t)^\trans
$$
Using first lemma \ref{lemma:ng_algo}, we can use the covariance matrix update \eqref{equation:sigma-quad}. If we multiply both the right and the left side of the covariance matrix update \eqref{equation:sigma-quad} by $\sqrt{\HMat}$, we get:
\begin{equation*}
 \sqrt{\HMat} \Sigma^{t+1} \sqrt{\HMat} =\sqrt{\HMat} \Sigma^t \sqrt{\HMat} - \nu_\Sigma^{t} \,\kappa \, ( \sqrt{\HMat} \Sigma^t \sqrt{\HMat})^{2}.
\end{equation*}
Reformulating the above equation in terms of the orthogonal and diagonal matrix decomposition, we get:
\begin{equation*}
\sqrt{\HMat} \Sigma^{t+1} \sqrt{\HMat} = O^{t} \left(D^{t} - \nu_\Sigma^{t} \,\kappa \, (D^{t})^{2}\right) (O^{t})^\trans.
\end{equation*}
As $D^{t} - \nu_\Sigma^{t} \,\kappa \, (D^{t})^{2}$ is trivially a diagonal matrix, the above equation proves that the orthogonal $O^t$ also diagonalizes $\sqrt{\HMat} \Sigma^{t+1} \sqrt{\HMat}$. 
This proves that the result holds for step $t+1$. In addition, we get as a by product that the next step diagonal matrix $D^{t+1}$ is related to the previous step diagonal matrix $D^{t}$ as follows:
\begin{equation}\label{equation:lambda}
D^{t+1} = D^{t} - \nu_\Sigma^{t} \,\kappa \, (D^{t})^{2}.
\end{equation}
which is very interesting and is almost the update equation of our lemma \ref{lemma:condition_diagonal}. To get exactly the update equation, the trick is to introduce the diagonal matrix $\Lambda_{t}=c D^{t}$ as the constant $c>0$ and remark that equation \eqref{equation:lambda} is transformed into:
\begin{equation}
\Lambda^{t+1} =\Lambda^{t} - \nu_\Sigma^{t}  (\Lambda^{t})^{2}.
\end{equation}
which is exactly the update of the lemma \ref{lemma:condition_diagonal}. At this stage, we can notice various interesting remarks:
\begin{itemize}
\item the condition number of matrix $\Sigma^{t} \HMat$ is also equal to the one of the matrix $\sqrt{\HMat} \Sigma^{t} \sqrt{\HMat}$ itself equal to the condition number of $D^{t}$ itself equal to the condition number of $\Lambda^{t}$
\item the diagonal matrix series $\left( \Lambda_{t} \right)_{t\in \mathbb{N}}$ satisfies the assumption of lemma \ref{lemma:condition_diagonal}, hence we get
\begin{equation}
\frac{\Cond(\Lambda_{t+1}) - 1}{\Cond(\Lambda_{t}) - 1} \leq 1 - \numin
\end{equation}
or equivalently
\begin{equation}
\frac{\Cond(\Sigma^{t+1} \HMat) - 1}{\Cond(\Sigma^{t} \HMat) - 1} \leq 1 - \numin
\end{equation}
\end{itemize}

These remarks imply that the following upper bound for the condition number of $\Sigma^{t} \HMat$
$$
\Cond(\Sigma^{t} \HMat) \leq 1 + (1 - \nucmin)^{t}(\Cond(\Sigma^{0} \HMat) - 1)
$$ 
which exactly the equation \eqref{equation:upper-cond}. Moreover a trivial usage of squeeze (also known as the pinching or sandwich) theorem proves that
$$\lim_{t\to\infty}\Cond(\Sigma^{t} \HMat) = 1$$  
as the condition number of a matrix is always lower bounded by 1.
Using our first remark about the equality of the condition number of the matrix $\Sigma^{t} \HMat$ with the one of matrix $\Lambda_{t}$ and equation  \eqref{equation:cond-eq3}, we also get 
\begin{align*}
\limsup_{t \to \infty} \frac{\Cond(\Sigma^{t+1} \HMat) - 1}{\Cond(\Sigma^{t} \HMat) - 1} &= \limsup_{t\to\infty }\frac{1 - 2 \nu_{\Sigma}^{t}\lambda_{1}^{t}}{1 - \nu_{\Sigma}^{t}\lambda_{1}^{t}} \\
&\leq \frac{1 - 2\nucmin}{1 - \nucmin}.
\end{align*}
This proves the equation \eqref{equation:rate-cond}. 
Moreover, if the limit $\nuclim$ exists, from \eqref{equation:cond-im}, we see that one can replace $\nucmin$ by $\nuclim$ in \eqref{equation:rate-cond}. This concludes the proof.
\end{proof}

\begin{remark}
Lemma \ref{lemma:cond_inverse_hessian} is remarkable as it proves that the covariance gradually adapts to the inverse of the Hessian of our optimization problem, performing something quite similar to a Newton decrement.
\end{remark}

We can now prove the speed convergence part of proposition \ref{prop:convergence} as follows:
\begin{proof}
Using lemma \ref{lemma:ng_algo} and lemma \ref{lemma:lsv_norm}, we can apply the matrix and vector norm inequality $\norm{M x}^{2} \leq \sigma_{1}(M)^{2} \norm{x}^{2}$ to \eqref{equation:mu-quad} and \eqref{equation:mu-quad2}, we get 
\begin{equation*}
\frac{\norm{\mu^{t+1}}}{\norm{\mu^t}} \leq \sigma_{1}(I - \kappa \nu_{\mu}^{t} \Sigma^{t} \HMat).
\end{equation*}

Likewise, the same applies for the update of the covariance matrix $\Sigma$ given by equations \eqref{equation:sigma-quad} and \eqref{equation:sigma-quad2}, leading to a similar inequality for the matrix norm of the covariance
\begin{equation*}
\frac{\norm{\Sigma^{t+1}}}{\norm{\Sigma^t}} \leq \sigma_{1}(I - \kappa \nu_{\Sigma}^{t} \Sigma^{t} \HMat).
\end{equation*}

Let us look at the maximum singular value as follows:
\begin{equation*}
\begin{split}
&\limsup_{t} \sigma_{1}(I - \kappa \nu_\mu^{t} \Sigma^{t}\HMat)
\\
&= \limsup_{t}\sigma_{1}\biggl(I - {\nu_\mu^{t} \lambda_{1}(\kappa \Sigma^{t}\HMat)} {\frac{ \Sigma^{t}\HMat}{\lambda_{1}( \Sigma^{t}\HMat)}} \biggr) 
\end{split}
\end{equation*}
We can tackle these terms one by one. Using the lemma \ref{lemma:cond_inverse_hessian}, we have 
that 
\begin{equation*}
\lim_{t \to \infty} {\frac{ \Sigma^{t}\HMat}{\lambda_{1}( \Sigma^{t}\HMat)}}  = I_{d}
\end{equation*}
while assumptions \eqref{equation:learning_rate_cond_mu} states that
\begin{equation*}
\nu_\mu^{t} \lambda_{1}(\kappa \Sigma^{t}\HMat) \geq \nummin
\end{equation*}
leading to 
\begin{equation*}
\begin{split}
&\limsup_{t} \sigma_{1}(I - c \nu_\mu^{t} \Sigma^{t}\HMat) \\
&= \limsup_{t}\sigma_{1}\biggl(I - \nummin I_{d} \biggr) \leq 1 - \nummin.
\end{split}
\end{equation*}

This implies linear convergence of $\mu$ with rate of convergence at most $1 - \nummin$,
that is
\begin{equation}
\limsup \frac{\norm{\mu^{t+1}}}{\norm{\mu^t}} \leq 1 - \nummin, \label{equation:rate-k1}
\end{equation}

Likewise, the reasoning is almost the same for the covariance matrix, and we also get
\begin{equation*}
\begin{split}
&\limsup_{t} \sigma_{1}(I - \kappa \nu_\Sigma^{t} \Sigma^{t}\HMat) \leq 1 - \nucmin.
\end{split}
\end{equation*}
which proves the  linear convergence of $\Sigma$ with rate of convergence at most $1 - \nucmin$, that is
\begin{equation}
\limsup \frac{\norm{\Sigma^{t+1}}}{\norm{\Sigma^t}} \leq 1 - \nucmin, \label{equation:rate-k2}
\end{equation}
which ends the proof.
\end{proof}

\end{document}